\newtheorem{theorem}{Theorem}
\newtheorem{lemma}[theorem]{Lemma} 
\newtheorem{lemma_thm}{Lemma}[theorem]
\newtheorem{corollary}[theorem]{Corollary}
\newtheorem{claim}{Claim}[section]
\newtheorem*{definition*}{Definition}
\newcommand{\norm}[1]{\left\lVert#1\right\rVert}
\begin{document}

\title{Top-k Combinatorial Bandits with Full-Bandit Feedback}

\author{\name Idan Rejwan \email idanrejwan@mail.tau.ac.il \\
       \addr Blavatnik School of Computer Science, Tel Aviv University, Tel Aviv Israel
       \AND
       \name Yishay Mansour \email mansour.yishay@gmail.com \\
       \addr Blavatnik School of Computer Science, Tel Aviv University, Tel Aviv Israel\\
and Google Research, Tel Aviv}

\editor{PLACEHOLDER}

\maketitle

\begin{abstract}
\emph{Top-k Combinatorial Bandits} generalize multi-armed bandits, where at each round any subset of $k$ out of $n$ arms may be chosen and the sum of the rewards is gained.
We address the full-bandit feedback, in which the agent observes only the sum of rewards, in contrast to the semi-bandit feedback, in which the agent observes also the individual arms' rewards.
We present the \emph{Combinatorial Successive Accepts and Rejects} (CSAR) algorithm, which generalizes SAR \citep{bubeck2013multiple} for top-k combinatorial bandits.
Our main contribution is an efficient sampling scheme that uses Hadamard matrices in order to estimate accurately the individual arms' expected rewards.
We discuss two variants of the algorithm, the first minimizes the sample complexity and the second minimizes the regret. 
We also prove a lower bound on sample complexity, which is tight for $k=O(1)$.
Finally, we run experiments and show that our algorithm outperforms other methods.
\end{abstract}

\begin{keywords}
  Multi-Armed Bandits, Combinatorial Bandits, Top-k Bandits, Hadamard Matrix, Sample Complexity, Regret Minimization
\end{keywords}

\section{Introduction}\label{sc:intro}

Multi-armed bandit (MAB) is an extensively studied problem in statistics and machine learning. The classical version of this problem is formulated as a system of $n$ arms (or actions), each having an unknown distribution of rewards. An agent repeatedly plays these arms in order to find the best arm and maximize its reward \citep{robbins1952some}. 

The MAB research focuses on two different objectives. The first aims to maximize the reward accumulated by the agent while playing the arms. This objective highlights the trade-off between exploration and exploitation, i.e., the balance between staying with the arm that gave highest reward in the past and exploring new arms that might give higher reward in the future. Success in this goal is measured by \emph{regret}, which is the difference between the best arm's expected reward over the time horizon and the reward accumulated by the agent over the same time. The second objective, sometimes referred to as \emph{best arm identification} or \emph{pure exploration}, aims to minimize the \emph{sample complexity} which is the number of steps until identifying the best arm with high probability. These two objectives might contradict each other, meaning that a policy which is good for finding the best arm quickly is not necessarily good for accumulating high reward \citep{bubeck2009pure}.

An extension of the standard MAB model is the \emph{Combinatorial Bandits} model \citep{cesa2012combinatorial,chen2013combinatorial}. In this model, instead of choosing one arm at each round, a decision set of actions is given, where each action is a subset of arms. \emph{Top-k} is a special case of combinatorial bandits, in which the decision set includes all subsets of size $k$ out of $n$ arms, and each action's reward is the sum of the $k$ arms. Combinatorial Bandits have two variants, depending on the feedback observed by the agent. In the simpler one the agent observes in each round the rewards of each of the $k$ individual arms, in addition to the aggregated reward. Such model is referred to as the \emph{semi-bandit} feedback. This is in contrast to the \emph{full-bandit} feedback, where the only feedback observed by the agent is the aggregated reward. Although much of the research studies the semi-bandit feedback \citep{chen2013combinatorial,chen2016combinatorial,combes2015combinatorial,kveton2015tight}, in many real-life problems it is costly or even impossible to gain information on each individual arm by itself. This is the case in, for example, crowd sourcing \citep{lin2014combinatorial} and adaptive routing \citep{awerbuch2004adaptive}, and also in scenarios where data privacy considerations come into play, such as online advertisement and medical trials.

Full-bandit feedback is harder than semi-bandit feedback, due to the lack of information about each individual arm. Each time a subset is sampled and an aggregated reward is observed, it is hard to assign the credit between the individual arms. One naive attempt to deal with it is to treat every possible subset as a distinct arm, and consider it as a classical MAB problem with $\binom{n}{k}$ arms. However, the number of arms is exponential, hence this approach is clearly inefficient. Additionally, it ignores the combinatorial structure that could extract some shared information between different subsets. Another attempt is to treat it as a special case of \emph{Linear Bandits}. In this model, each arm $a$ is a vector in a decision set $\mathcal{D}\subseteq\mathbb{R}^n$, and its expected reward is the inner product between $a\in\mathcal{D}$ and the reward vector $\theta\in\mathbb{R}^n$. Combinatorial bandits are actually a special case of linear bandits, where the decision set is limited to binary vectors with exactly $k$ ones. One could hope to use LinUCB, the highly established algorithm for linear bandits \citep{abbasi2011improved,dani2008stochastic,chu2011contextual}, to solve combinatorial bandits. This algorithm involves an optimization problem to find which subset to sample at each round, however for combinatorial decision sets the optimization is NP-hard \citep{dani2008stochastic, kuroki2019polynomial}.
Thus, we wish to find an algorithm that is (a) informative - gives enough information on each individual arm; (b) efficient - uses a small number of samples; and (c) polynomial time computable. Our main contribution is an algorithm that fulfills all three requirements, as we show theoretically and empirically. 

In this work, we describe an algorithm for full-bandit feedback that finds the optimal subset of arms efficiently. The algorithm is based on the \emph{Successive Accepts and Rejects (SAR)} algorithm \citep{bubeck2013multiple}, that iteratively estimates the arms within increasing level of accuracy, and accepts or rejects arms until it finds the optimal subset. While the original algorithm is designed for classical MABs, it is not clear how to estimate the expected rewards of the individual arms given full-bandit feedback. Our main novelty is thus describing an efficient method for estimating the individual arms' rewards and by this generalizing SAR to full-bandit feedback. We present a sampling scheme that uses Hadamard matrices to estimate the arms using a small number of samples. We show that this scheme is efficient, by proving that the number of samples needed to find the optimal subset with probability at least $1-\delta$ is at most $O\big(\sum_{i=1}^n \frac{1}{\Delta_i^2}\log\frac{n}{\delta}\big)$, where $\Delta_i$'s are the gaps between the optimal and sub-optimal arms (see Section \ref{sc:prelim} for formal definition). We also prove a lower bound of $\Omega\big(\frac{n}{\epsilon^2}\big)$ samples for finding a subset whose expected reward is within  $\epsilon$ of the optimal. Note that in the combinatorial model the feedback depends on $k$ actions, rather than a single one, thus it might be more informative.
%
Second, we discuss regret minimization. We show that the algorithm that minimizes sample complexity does not minimize the regret. Instead, we suggest a modification to the algorithm that achieves $O\big(\frac{nk}{\Delta}\log T\big)$ distribution-dependent and $O(k\sqrt{nT})$ distribution-independent regret where $\Delta=\min \Delta_i$ and $T$ is the time horizon. To the best of our knowledge, this is the first algorithm to achieve $O(\log T)$ distribution-dependent regret in the full-bandit setting. 
Finally, we conduct experiments that show that the proposed algorithm achieves small sample complexity and  regret comparing to other methods.

\subsection{Related Work}\label{sc:relatedWork}

\paragraph{Best Arm Identification.}
The problem of \emph{Best Arm Identification}, a.k.a. \emph{Pure Exploration}, was introduced by \cite{even2006action}, and later by \cite{bubeck2009pure}, where the goal is to find the best arm using a minimal number of samples. \cite{even2006action} describe two algorithms for this end, one of them is \emph{Successive Elimination} that in each round estimates all the arms with an increasing level of accuracy, and eliminates the arms which are far from the optimal arm with high confidence. This algorithm uses $O\big(\sum_{i=1}^n \frac{1}{\Delta_i^2}\log\frac{n}{\delta}\big)$ to find the optimal arm with probability at least $1-\delta$. It is the conceptual basis for a number of algorithms, including the one we describe in this work.

\paragraph{Multiple Arms Identification.}
As an extension for Best-Arm Identification, the goal of \emph{Multiple Arms Identification} is to find the best $k$ arms where the samples are still of one arm in each round.
This problem, a.k.a. \emph{Subset Selection} or \emph{Explore-k}, was introduced by \cite{kalyanakrishnan2010efficient}, and a variety of algorithms were designed for this end \citep{kalyanakrishnan2010efficient, kalyanakrishnan2012pac, chen2014combinatorial, zhou2014optimal}. One notable algorithm is \emph{Successive Accepts and Rejects (SAR)} \citep{bubeck2013multiple}, which generalizes Successive Elimination algorithm to multiple arms identification by adding a set of accepted arms that have been identified as part of the optimal arms.

\paragraph{Combinatorial Bandits.} 
Most of the works in the framework of stochastic combinatorial bandits address the semi-bandit feedback \citep{chen2013combinatorial,chen2016combinatorial,combes2015combinatorial,kveton2015tight,merlis2019batch}. For full-bandit feedback, only a few algorithms were suggested. One of them is ConfidenceBall$_1$ in \cite{dani2008stochastic} which is a polynomial time approximation for LinUCB for linear bandits with NP-hard decision sets. Another approximation for LinUCB is described by \cite{kuroki2019polynomial}, which uses an approximated method for quadratic optimization based on graphs. A different approach is taken by \cite{agarwal2018regret}, which is designed for cases when the aggregated reward is not necessarily the sum of individual arms. This algorithm is based on Explore-then-Exploit approach and achieves regret of $O(k^\frac{1}{2}n^\frac{1}{3}T^\frac{2}{3})$. 
%
%
\cite{lin2014combinatorial} consider a problem that somewhat generalizes the full-bandit setting, where the reward is not necessarily the sum of individual arms, but the feedback for the agent is a linear combination of the arms' rewards and show an $O(T^{2/3}\log T)$ regret bound.
For the sake of completeness, we note that there are also a number of works on full-bandit feedback in the adversarial setting \citep{cesa2012combinatorial,combes2015combinatorial}.

\paragraph{Lower Bounds.}
For best arm identification, 
$\Theta\big(\frac{n}{\epsilon^2}\log\frac{1}{\delta}\big)$ samples are necessary and sufficient for any $(\epsilon,\delta)$-PAC algorithm to identify the best arm \citep{mannor2004sample,even2006action}. For multiple arms identification, a slightly more samples are needed, where the lower bound is $\Omega\big(\frac{n}{\epsilon^2}\log\frac{k}{\delta}\big)$ \citep{kalyanakrishnan2012pac,kaufmann2013information}. Our work extends this bounds to the full-bandit feedback, providing a lower bound of $\Omega\big(\frac{n}{\epsilon^2}\big)$ on the sample complexity. \\
As for the regret, a seminal work by \cite{lai1985asymptotically} bounds the regret of classical MAB as $\Omega\Big(\sum_i \frac{1}{\Delta_i}\log T\Big)$. This result extents to $\Omega\big(c(\theta)\log T\big)$ for combinatorial bandits, where $c(\theta)$ is a solution of an optimization problems that depends on the distribution of rewards \citep{talebi2017stochastic}.\\
Another type of bounds is the distribution-independent regret bounds, that does not depend on the distribution of the arms rewards. For classical MABs, a well-known lower bound of $\Omega(\sqrt{nT})$ was proven by \cite{auer2002nonstochastic}. This result extends to $\Omega(\sqrt{knT})$ for combinatorial bandits with semi-bandit feedback \citep{kveton2015tight, lattimore2018toprank}. For full-bandit feedback, there are even stronger results of $\Omega(k\sqrt{nT})$ \citep{audibert2013regret} and $\Omega(k\sqrt{knT})$ \citep{cohen2017tight}, if the decision set is limited, i.e., not all subsets can be selected by the agent, which is not the case in our setting.
Another relevant bound is for linear bandits, where the regret is bounded by $\Omega(n\sqrt{T})$ \citep{dani2008stochastic}.
\section{Preliminaries}\label{sc:prelim}

Suppose that there are $n$ arms numbered $1,2,\dots,n$, and each arm $i\in[n]$ is associated with a random variable $X_i=\theta_i+\eta_i$ such that $\theta_i$ is the expected reward and $\eta_i$ is 1-subgaussian noise. We assume the arms are ordered such that $\theta_1\geq\dots\geq\theta_n$, but this order is not known to the agent. In each round $t$, the agent selects a subset $S_t$ of $k$ arms and observes a reward $r_t=\sum_{i\in S_t} X_i$, where each arm $X_i$ is sampled independently.

The agent's objective is to find a subset $S$ that maximizes the expected reward $\mu_S=\mathbb{E}[r_S]$. Since the arms are independent we can write $\mu_S=\sum_{i\in S}\theta_i$. Accordingly, the optimal subset is $S^*=\{1,\dots,k\}$ with expected reward $\mu^*=\sum_{i=1}^k \theta_i$.

We adopt the $(\epsilon,\delta)$-PAC framework \citep{valiant1984theory}, in which the goal of the agent is to output a subset $S$ such that for any $\epsilon,\delta>0$, $Pr[\mu^*-\mu_S>\epsilon]<\delta$.

The regret of the agent over time horizon $T$ is defined as
\[R=\mathop{\mathbb{E}}\bigg[\sum_{t=1}^{T}\Big(\mu^*-r_t\Big)\bigg]=T\mu^*-\sum_{t=1}^{T}\mu_t\]
where $\mu_t=\mathbb{E}[r_t]$ is the expected reward at round $t$. The regret is measured in terms of the gaps between the arms. For every arm $i\in[n]$ we define the gap 
\[\Delta_i=\begin{cases}
 \theta_i-\theta_{k+1} & i\leq k\\
 \theta_k-\theta_i & i>k
\end{cases}\]
Note that the gaps are defined differently than for classical MAB, as the optimal arms also have gaps comparing to the best sub-optimal arm $k+1$. Intuitively, the gaps are the arms' distances from changing their status from optimal to sub-optimal arms and vice versa.\\
Finally, we define $\Delta=\min_i \Delta_i=\Delta_k=\Delta_{k+1}$.

\subsection{Hadamard Matrices}
The algorithm we present in this paper uses the Hadamard matrix. We define it here and discuss a few properties of it, for more information see \cite{horadam2012hadamard}.

\begin{definition*}
A square matrix $H$ of size $n$ is called Hadamard if its entries are $\pm 1$ and it satisfies $H^\intercal H=n\mathbb{I}$, where $\mathbb{I}$ is the identity matrix.
\end{definition*}

Hadamard matrices satisfy the following properties:
\begin{itemize}
    \item Any $H$ can be normalized such that the first row contains only positive entries.
    \item For any $i>1$, the $i^{th}$ row in $H$ contains an equal number of $+1$ and $-1$.
    \item For any $n$, there exists a Hadamard matrix of size $2^n$. It is conjectured that Hadamard matrices exist for any multiple of $4$, and the matrices for most of the multiples of $4$ up to $2000$ are known \citep{djokovic2008hadamard}.
\end{itemize}

It is interesting to mention that Hadamard matrices maximize the determinant of $\pm 1$ matrices, which makes them D-optimal design matrices \citep[see][chap.~4]{horadam2012hadamard}.
\section{Combinatorial Successive Accepts and Rejects Algorithm}
In this chapter we present the Combinatorial Successive Accepts and Rejects Algorithm for top-k combinatorial bandits. We begin by presenting an efficient estimation algorithm that estimates the expected rewards for all the arms, and then discuss the main algorithm that uses the estimation algorithm in order to find the best subset of $k$ arms. Finally, we bound the sample complexity and regret achieved by the algorithm.

\subsection{Estimation Algorithm}
The first algorithm we discuss suggests an efficient method to estimate the expected rewards of the arms under full bandit feedback. The algorithm gets as inputs a set $\mathcal{N}$ of $n$ arms, a subset size $k$, a level of accuracy $\epsilon$ and a level of confidence $\delta$.
The algorithm first partitions $\mathcal{N}$ into sets of size $2k$. In each of those sets, it makes use of the Hadamard matrix as an instructor for the subsets to sample. Let $H$ be the Hadamard matrix of size $2k$, then for each row $H_i\,(i\ne 1)$ the algorithm partitions the arms according to the positive and negative entries in $H_i$. Since in every row exactly half of the entries are positive, the partition forms two sets of size $k$. For $i=1$, $H_1$ has only positive entries, so the algorithm partitions arbitrarily to two sets. Each of these sets is sampled enough times to get a good estimate on its expected reward. Then, the sets' estimated rewards are summed according to their sign in $H$. This way we get a vector $\hat{Z}$ that is equal in expectation to $H\theta$. Finally, to estimate the individual arms' rewards, the algorithm uses the Hadamard matrix inverse $H^{-1}\hat{Z}=\frac{1}{2k}H^\intercal\hat{Z}$, which is the least squares estimator for $\theta$ given $\hat{Z}$.

\begin{algorithm}
\caption{EST1$(\mathcal{N},k,\epsilon,\delta)$}
\label{alg:estimate}
$n=|\mathcal{N}|$;
$m(\epsilon,\delta)=\frac{2}{\epsilon^2}\log\frac{2n}{\delta}$\\
Partition $\mathcal{N}$ into sets of size $2k$: $\mathcal{N}_1\dots \mathcal{N}_\frac{n}{2k}$  \\
\For{$l=1\dots \frac{n}{2k}$}{
Let $\mathcal{N}_l=\{j_1\dots j_{2k}\}$\\
$S_{1,{-1}}=\{j_1,\dots ,j_k\},\,S_{1,{+1}}=\{j_{k+1},\dots ,j_{2k}\}$ \Comment{$i=1$}\\
$S_{i,{b}}=\{j\in\mathcal{N}_l\,|\,H_{ij}=b\}$ \Comment{$i=2\dots 2k,\, b\in\{-1,+1\}$}\\
\For{$i\in [2k],b\in\{-1,+1\}$}{
Sample $S_{i,b}$ for $m=m(\epsilon,\delta)$ times and observe rewards $r_1,\dots,r_m$\\
$\hat\mu_{i,b}=\frac{1}{m}\sum_{t} r_t$
}
$\hat{Z}_1=\hat\mu_{1,{+1}} + \hat\mu_{1,{-1}}$ \Comment{$i=1$}\\
$\hat{Z}_i=\hat\mu_{i,{+1}} - \hat\mu_{i,{-1}}$ \Comment{$i=2\dots 2k,\, b\in\{-1,+1\}$}

$\hat\theta_{\mathcal{N}_l}=\frac{1}{2k}H^\intercal \hat{Z}$
}
\Return $\hat\theta$
\end{algorithm}

\paragraph{Remark 1.} For simplicity, we assume that $2k$ divides $n$. Otherwise, when partitioning the arms in the first step we may repeat arms in the last subset. This increases the number of estimations by at most $2k$, and thus we replace $n$ with $n+2k$ in the number of samples $m(\epsilon,\delta)$. Since $n>2k$, this modification does not change the order of magnitude of the sample complexity and regret.

\paragraph{Remark 2.} We assume that there exists a Hadamard matrix of size $2k$. Otherwise, let $2q\in\mathbb{N}$ be a multiple of $2k$ such that there exists a Hadamard matrix of size $2q$. Partition the arms into subsets of size $2q$ (instead of $2k$), and then in each row the number of positive and negative entries is a multiple of $k$. Then, partition them to $\frac{q}{k}$ sets of size $k$, sample each one separately, and then sum them to get $\hat\mu_{+1}$ and $\hat\mu_{-1}$. This modification changes the sample complexity and regret by at most a constant factor.

\begin{lemma}\label{lemma:PACest1}
For any $\epsilon,\delta>0$ and $k$, and any set of $n$ arms $\mathcal{N}$, EST1 returns an estimated reward vector $\hat\theta$ such that
\[Pr\Big[\forall i,\,|\hat\theta_i-\theta_i|\leq \epsilon\Big]\geq 1-\delta\]
\end{lemma}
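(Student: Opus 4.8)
The plan is to analyze a single block $\mathcal{N}_l$ of $2k$ arms and then combine all blocks by a union bound; write $\theta$ for the vector of the $2k$ true means in the block and let $H$ be the size-$2k$ Hadamard matrix. The first step is to show the estimator is \emph{unbiased}. For a row $i\ge 2$, the sets $S_{i,+1},S_{i,-1}$ are precisely the arms with $H_{ij}=+1$ and $H_{ij}=-1$, so $\mathbb{E}[\hat{Z}_i]=\sum_j H_{ij}\theta_j=(H\theta)_i$; for $i=1$ the partition is arbitrary but still covers all $2k$ arms, and since $H_{1j}=+1$ we get $\mathbb{E}[\hat{Z}_1]=\sum_j\theta_j=(H\theta)_1$. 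Hence $\mathbb{E}[\hat{Z}]=H\theta$, and using $H^{-1}=\frac{1}{2k}H^\intercal$ gives $\mathbb{E}[\hat\theta_{\mathcal{N}_l}]=\frac{1}{2k}H^\intercal H\theta=\theta$.

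Next I would control the fluctuation around this mean. Set $\xi_{i,b}=\hat\mu_{i,b}-\mathbb{E}[\hat\mu_{i,b}]$. Each $\hat\mu_{i,b}$ is an empirical mean over $m$ fresh samples of a subset of $k$ independent $1$-subgaussian arms, so a single sample is $\sqrt{k}$-subgaussian and the average $\xi_{i,b}$ is $\sqrt{k/m}$-subgaussian. The $4k$ variables $\{\xi_{i,b}\}$ are moreover \emph{mutually independent}: although an arm may appear in many of the sets $S_{i,b}$, each set is sampled in its own dedicated rounds, so the realized noises never overlap. Expanding the estimator, the error at arm $j$ is $\hat\theta_j-\theta_j=\frac{1}{2k}\sum_{i=1}^{2k}H_{ij}\big(\hat{Z}_i-\mathbb{E}[\hat{Z}_i]\big)$, a linear combination of the $\xi_{i,b}$ in which every coefficient equals $\pm\frac{1}{2k}$, since $H_{ij}=\pm1$.

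The crucial step is the variance bookkeeping. Because independent subgaussian variance proxies add, $\hat\theta_j-\theta_j$ is subgaussian with parameter $\sigma^2=4k\cdot\big(\tfrac{1}{2k}\big)^2\cdot\tfrac{k}{m}=\frac{1}{m}$, where the count $4k$ is two sets per row times $2k$ rows. The payoff of the Hadamard design is exactly that $\sigma^2$ collapses to $1/m$ \emph{independently of $k$}: every arm ends up estimated as precisely as a lone arm sampled $m$ times, even though each physical observation reports only an aggregate of $k$ arms. Applying the standard subgaussian tail bound $Pr[|\hat\theta_j-\theta_j|>\epsilon]\le 2e^{-\epsilon^2 m/2}$ and substituting $m=\frac{2}{\epsilon^2}\log\frac{2n}{\delta}$ gives a per-arm failure probability of at most $2e^{-\log(2n/\delta)}=\delta/n$. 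A union bound over all $n$ arms (across every block) then yields $Pr[\exists j:|\hat\theta_j-\theta_j|>\epsilon]\le\delta$, which is the claim.

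I expect the third step to be the real content, not the first two. The care needed there is to confirm that the noise terms from different rows are genuinely independent so that their variance proxies simply sum, and then to check that the Hadamard orthogonality makes the $k$-dependence cancel; the rest is routine subgaussian concentration together with the union bound.
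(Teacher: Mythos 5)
Your proof is correct and follows essentially the same route as the paper's: unbiasedness via $H^\intercal H = 2k\mathbb{I}$, a variance-proxy computation showing each $\hat\theta_j-\theta_j$ is subgaussian with parameter $1/m$, then Hoeffding and a union bound over the $n$ arms. The one place you are more explicit than the paper — justifying that the $4k$ noise terms $\xi_{i,b}$ are mutually independent because each subset is sampled in its own fresh rounds — is a genuine prerequisite for summing the variance proxies, and it is good that you flagged it.
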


\begin{proof}
We first prove that $\hat\theta$ is an unbiased estimator of the reward vector $\theta$. For simplicity fix $N_1=\{1,\dots ,2k\}$ and write $\hat\theta$ instead of $\hat\theta_{N_1}$. 
Note that for each subset $S$, the average $\hat\mu_S=\frac{1}{m}\sum_t r_t$ is an unbiased estimator for the set's reward, namely $\mathbb{E}[\hat\mu_S]=\mu_S=\sum_{i\in S} \theta_i$. As a consequence for each $i\ne 1$, $\hat{Z}_i$ satisfies
\[\mathbb{E}[\hat{Z}_i]= \mu_{i,+1} - \mu_{i,-1} = \sum_{j\in S_{i,+1}} \theta_j - \sum_{j\in S_{i,-1}} \theta_j =\sum_{j=1}^{2k} H_{ij}\theta_j = H_i^\intercal\theta \]
and the same for $i=1$.
Thus $\hat{Z}$ satisfies $\mathbb{E}[\hat{Z}]=H\theta$, and 
$\mathbb{E}[\hat\theta]=\frac{1}{2k}H^\intercal\mathbb{E}[\hat{Z}]=\frac{1}{2k}H^\intercal H\theta=\theta$.

Fix some subset $S$ sampled by the algorithm, and we prove that the estimation noise $\hat\eta_S=\hat\mu_S-\mu_S$ is $\frac{k}{m}$-subgaussian. By definition,
\[\hat\mu_S=\frac{1}{m}\sum_{t=1}^m r_t= \frac{1}{m}\sum_{t=1}^m \sum_{i\in S} X_i= \frac{1}{m}\sum_{t=1}^m \sum_{i\in S} (\theta_i + \eta_{it})=\mu_S+\frac{1}{m}\sum_{t=1}^m \sum_{i\in S} \eta_{it}\]
Since the noise terms $\eta_{it}$ are 1-subgaussians, and we sum over $k$ such terms in each $t$, the total estimation noise is $\frac{k}{m}$-subgaussian. Accordingly, the estimation noise of each $\hat{Z}_i$, given by $\eta_{Z_i}=\hat{Z}_i-\mathbb{E}[\hat{Z}_i]=\eta_{i,+1}-\eta_{i,-1}$ is $\frac{2k}{m}$-subgaussian. Finally, the estimation noise
$\hat\theta_i-\theta_i=\frac{1}{2k}\sum_{j=1}^{2k} H_{ij}\eta_{Z_j}$
is also subgaussian with parameter $\frac{2k}{2km}=\frac{1}{m}$. Thus by Hoeffding inequality for subgaussian random variables,
\[Pr\Big[|\hat\theta_i-\mathbb{E}[\hat\theta_i]|\geq \epsilon\Big]\leq 2\exp\bigg({-\frac{\epsilon^2}{2} m}\bigg)=\frac{\delta}{n}\]
where we substituted the number of samples $m(\epsilon,\delta)$. Finally, the probability of error in one parameter is at most $\frac{\delta}{n}$, and thus by the union bound the probability of error in one parameter or more is at most $\delta$.
\end{proof}

\subsection{Main Algorithm}
We now show how to use the estimation method described above to find the best subset. The algorithm, which we call \emph{Combinatorial Successive Accepts and Rejects} (CSAR), is based on \cite{bubeck2013multiple} for multiple arms identification. CSAR works in phases. In each phase $t$ it maintains a decaying level of accuracy $\epsilon_t$ and confidence $\delta_t$ and uses EST1 to estimate the arms to a given level of accuracy and confidence. Then, it sorts the arms according to their estimations $\hat\theta^t_1\geq \hat\theta^t_2\geq\dots \geq\hat\theta^t_n$, and accepts arms whose estimated reward is bigger than $\hat\theta^t_{k+1}$ by at least $2\epsilon_t$, i.e., $\hat\theta^t_{i}- \hat\theta^t_{k+1}\geq 2\epsilon_t$, as they are optimal with high confidence. Similarly, it rejects arms whose estimated reward is smaller than $\hat\theta^t_{k}$ by at least $2\epsilon_t$. The algorithm proceeds until $n-k$ arms are rejected.

\begin{algorithm}
\caption{Combinatorial Successive Accepts and Rejects (CSAR)}
\label{alg:csar}
$\mathcal{N}^1=\mathcal{N}; \mathcal{A}^1=\emptyset$;
$\epsilon_1 = \frac{1}{2}; \delta_1 = \frac{6}{\pi^2}\delta$\\
\While{$|\mathcal{N}^t\cup\mathcal{A}^t|>k$}{
$\hat\theta^t= EST1(\mathcal{N}^t,k,\epsilon_t,\delta_t)$\\
Sort $\mathcal{N}^t\cup\mathcal{A}^t$ according to $\hat\theta^t$ such that $\hat\theta^t_1\geq \hat\theta^t_2\geq\dots \geq\hat\theta^t_n$ \\
$\mathcal{A}=\{i\in \mathcal{N}^t\,|\,\hat\theta_i^t - \hat\theta^t_{k+1} > 2\epsilon_t\}$\\
$\mathcal{R}=\{i\in \mathcal{N}^t\,|\,\hat\theta^t_k - \hat\theta_i^t > 2\epsilon_t\}$\\
$\mathcal{A}^{t+1}=\mathcal{A}^t\cup\mathcal{A}$\\
$\mathcal{N}^{t+1}=\mathcal{N}^t\setminus(\mathcal{A}\cup\mathcal{R}) $ \\
$\epsilon_{t+1}=\frac{\epsilon_t}{2}; \delta_{t+1}=\frac{\delta_1}{t^2};t=t+1$
}
\Return{$\mathcal{A}^t\cup\mathcal{N}^t$}
\end{algorithm}

\begin{lemma}\label{thm:pac}
For any $\delta>0$, CSAR with EST1 is $(0,\delta)$-PAC, i.e., it finds the optimal subset with probability at least $1-\delta$.
\end{lemma}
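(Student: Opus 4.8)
The plan is to condition on a single high-probability event under which every call to EST1 is accurate, and then show deterministically that the accept/reject rules never err. Let $G$ be the event that in every phase $t$ the estimate returned by EST1 satisfies $|\hat\theta_i^t-\theta_i|\le\epsilon_t$ for all arms $i$ simultaneously. Lemma \ref{lemma:PACest1} bounds the failure probability of phase $t$ by $\delta_t$, so by a union bound $\Pr[\neg G]\le\sum_t\delta_t$, and the schedule $\delta_1=\frac{6}{\pi^2}\delta$, $\delta_{t+1}=\delta_1/t^2$ is chosen exactly so that this sum is at most $\delta$ (using $\sum_{t\ge1}t^{-2}=\pi^2/6$). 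Hence $\Pr[G]\ge1-\delta$, and it suffices to prove that on $G$ the algorithm returns $S^*=\{1,\dots,k\}$.

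I would then prove by induction on $t$ the invariant $\mathcal{A}^t\subseteq S^*$ and $S^*\subseteq\mathcal{A}^t\cup\mathcal{N}^t$, i.e.\ only optimal arms are accepted and no optimal arm is rejected. The base case is immediate since $\mathcal{A}^1=\emptyset$ and $\mathcal{N}^1=\mathcal{N}$. For the step, the hypothesis puts all $k$ arms of $S^*$ in the current pool $P=\mathcal{N}^t\cup\mathcal{A}^t$, so the $k$ largest true rewards within $P$ are exactly $\theta_1,\dots,\theta_k$; under $G$ each optimal arm has estimate at least $\theta_k-\epsilon_t$, whence at least $k$ arms of $P$ lie above $\theta_k-\epsilon_t$ and the $k$-th largest estimate obeys $\hat\theta^t_{(k)}\ge\theta_k-\epsilon_t$.

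For the key mistake-freeness claim, take a suboptimal arm $i\notin S^*$, so $\theta_i\le\theta_{k+1}$ and $\hat\theta_i^t\le\theta_{k+1}+\epsilon_t$. If $i$ is not among the empirical top-$k$ of $P$ then $\hat\theta_i^t\le\hat\theta^t_{(k+1)}$ and the acceptance gap is nonpositive; if $i$ is among the empirical top-$k$, it must displace some optimal arm $j$ to rank at least $k+1$, so $\hat\theta^t_{(k+1)}\ge\hat\theta_j^t\ge\theta_k-\epsilon_t$ and therefore $\hat\theta_i^t-\hat\theta^t_{(k+1)}\le(\theta_{k+1}+\epsilon_t)-(\theta_k-\epsilon_t)\le2\epsilon_t$. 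Either way $i$ fails the acceptance test, and the symmetric argument (exchanging $\theta_k\leftrightarrow\theta_{k+1}$ and accept$\leftrightarrow$reject) shows no optimal arm is rejected, preserving the invariant. For termination and correctness, since $S^*$ is never rejected the pool always contains its $k$ arms; moreover each suboptimal $i$ satisfies $\hat\theta^t_{(k)}-\hat\theta_i^t\ge\Delta_i-2\epsilon_t$, which exceeds $2\epsilon_t$ once $\epsilon_t<\Delta_i/4$, so as $\epsilon_t=2^{-t}\to0$ every suboptimal arm is eventually rejected and the loop exits with $|\mathcal{A}^t\cup\mathcal{N}^t|=k$; since $S^*\subseteq\mathcal{A}^t\cup\mathcal{N}^t$ and $|S^*|=k$, the output equals $S^*$.

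I expect the main obstacle to be this displacement argument. The thresholds compare against the empirical $k$-th and $(k+1)$-th largest estimates rather than against fixed arms, and earlier rejections may already have removed the boundary arm $k+1$ from the pool, so one cannot simply anchor $\hat\theta^t_{(k+1)}$ to $\theta_{k+1}$. The crux is realizing that whenever a suboptimal arm intrudes into the empirical top-$k$ it necessarily pushes a genuinely optimal arm down to rank $k+1$, and that arm's large estimate is exactly what caps the acceptance gap at $2\epsilon_t$; carrying this bookkeeping through symmetrically for rejections is the heart of the proof.
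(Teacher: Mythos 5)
Your proposal is correct and follows the same route as the paper's proof: define the per-phase bad event that some arm is estimated to accuracy worse than $\epsilon_t$, bound it by $\delta_t$ via Lemma \ref{lemma:PACest1}, and union-bound over phases using $\sum_t \delta_1/t^2 \le \delta$. The only difference is one of completeness: the paper simply asserts that accepting a sub-optimal arm or rejecting an optimal one can occur only under the bad event, whereas you actually prove this deterministic claim via the displacement argument (a sub-optimal arm entering the empirical top-$k$ forces an optimal arm down to rank $k+1$, capping the acceptance gap at $2\epsilon_t$), together with the termination argument showing every sub-optimal arm is eventually rejected once $\epsilon_t<\Delta_i/4$ — details the paper leaves implicit but which your write-up correctly supplies.
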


\paragraph{Remark 3.} One can easily modify CSAR to be $(\epsilon,\delta)$-PAC. For that, we provide the algorithm also with a level of accuracy $\epsilon$, and instead of stopping only when $k$ arms are left, we may stop earlier when $\epsilon_t\leq \frac{\epsilon}{2k}$ and return the top $k$ arms according to the last estimation. It is not hard to show that the surviving arms are $2\epsilon_t$ close to the optimal arms and therefore the output is at most $k\epsilon_t=\epsilon$ far from the optimal subset. 

\subsection{Sample Complexity}
In this section, we bound CSAR's sample complexity in the following theorem.
\begin{theorem}\label{thm:SampleComp}
For any $\delta>0$, the total number of samples performed by CSAR with EST1 is at most
\begin{equation}\label{eq:sampleComp}
    M= O\bigg(\sum_{i=1}^n\Big(\frac{1}{\Delta_i^2}\log\frac{n}{\delta}+\log\log\frac{1}{\Delta_i}\Big)\bigg)
\end{equation}
\end{theorem}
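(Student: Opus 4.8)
The plan is to work on the \emph{good event} $\mathcal{E}$ that every EST1 call is accurate at its prescribed level, i.e.\ $|\hat\theta_i^t-\theta_i|\le\epsilon_t$ for all arms handed to EST1 in every phase $t$. By Lemma~\ref{lemma:PACest1} the $t$-th call fails with probability at most $\delta_t$, and since the schedule $\delta_t\propto t^{-2}$ is chosen so that $\sum_t\delta_t\le\delta$, a union bound over phases shows $\mathcal{E}$ holds with probability at least $1-\delta$; I bound the sample count on $\mathcal{E}$. The cost of one phase is read off from EST1: on $n_t=|\mathcal{N}^t|$ active arms it forms $n_t/2k$ blocks and samples $4k$ subsets per block $m(\epsilon_t,\delta_t)$ times, for a total of $2n_t\,m(\epsilon_t,\delta_t)=\frac{4n_t}{\epsilon_t^2}\log\frac{2n_t}{\delta_t}$ samples. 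Thus $M=\sum_t 2n_t\,m(\epsilon_t,\delta_t)$, which I rewrite as a charge of $2\,m(\epsilon_t,\delta_t)$ to each arm that is still active in phase $t$.

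The heart of the argument is a survival lemma: on $\mathcal{E}$, arm $i$ is removed from $\mathcal{N}^t$ as soon as $\epsilon_t<\Delta_i/4$, so it contributes to at most $t_i=\lceil\log_2(4/\Delta_i)\rceil=O(\log\frac1{\Delta_i})$ phases. To prove it, first invoke Lemma~\ref{thm:pac}: on $\mathcal{E}$ no optimal arm is ever rejected and no suboptimal arm accepted, so the $a_t<k$ accepted arms are genuinely optimal, the remaining $k-a_t$ optimal arms stay active with estimates $\ge\theta_k-\epsilon_t$, and no active suboptimal arm has an estimate exceeding $\theta_{k+1}+\epsilon_t$. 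Consequently the order statistics used by the tests satisfy $\hat\theta_k^t\ge\theta_k-\epsilon_t$ and $\hat\theta_{k+1}^t\le\theta_{k+1}+\epsilon_t$. For an optimal $i\le k$ this yields $\hat\theta_i^t-\hat\theta_{k+1}^t\ge\Delta_i-2\epsilon_t$, which clears the acceptance threshold $2\epsilon_t$ precisely when $\epsilon_t<\Delta_i/4$; the symmetric bound $\hat\theta_k^t-\hat\theta_i^t\ge\Delta_i-2\epsilon_t$ rejects a suboptimal $i>k$ in the same phase.

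It then remains to evaluate $M=\sum_i\sum_{t\le t_i}2\,m(\epsilon_t,\delta_t)$. For a fixed $i$ the inner sum is $\sum_{t\le t_i}\frac{4}{\epsilon_t^2}\log\frac{2n}{\delta_t}$; since $\epsilon_t^{-2}=4^t$ grows geometrically, the sum is within a constant of its final term $\frac{4}{\epsilon_{t_i}^2}\log\frac{2n}{\delta_{t_i}}$. By the choice of $t_i$ we have $\epsilon_{t_i}^{-2}=\Theta(\Delta_i^{-2})$, and unpacking $\delta_{t_i}=\delta_1/(t_i-1)^2$ gives $\log\frac{2n}{\delta_{t_i}}=\log\frac{2n}{\delta_1}+2\log(t_i-1)=O\big(\log\tfrac n\delta\big)+O\big(\log\log\tfrac1{\Delta_i}\big)$, the second term because $t_i=O(\log\frac1{\Delta_i})$. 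Summing over the $n$ arms produces the two terms of \eqref{eq:sampleComp}, the $\log\log\frac1{\Delta_i}$ correction being exactly the footprint of the decaying confidence level $\delta_t$.

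The main obstacle is the survival lemma, not the arithmetic: the accept/reject tests compare a single arm against the $k$-th and $(k+1)$-th order statistics of the combined pool $\mathcal{N}^t\cup\mathcal{A}^t$, whose membership (and hence which true value sits at each rank) changes from phase to phase, while accepted arms carry stale estimates. The delicate point is to show these order statistics still straddle the true boundary $(\theta_k,\theta_{k+1})$ up to $\pm\epsilon_t$; the counting argument above, powered by the no-misclassification guarantee of Lemma~\ref{thm:pac}, is what makes this go through. Once the survival times $t_i$ are established, the geometric summation and the separation of the $\log\frac n\delta$ and $\log\log\frac1{\Delta_i}$ contributions from $m(\epsilon_t,\delta_t)$ are routine.
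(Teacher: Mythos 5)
Your proposal is correct and follows essentially the same route as the paper: condition on the good event where all EST1 calls are accurate, show that an arm survives only while $\Delta_i\le 4\epsilon_t$ (so for $O(\log\frac{1}{\Delta_i})$ phases), and then bound the geometric sum of per-phase costs, with the $\log\log\frac{1}{\Delta_i}$ term coming from the $\log\frac{1}{\delta_t}=\log\frac{t^2}{\delta_1}$ factor — this is exactly the paper's Lemma~\ref{lemma:armSampleComp} plus the summation over arms. The only difference is cosmetic (you charge $2m(\epsilon_t,\delta_t)$ rounds per active arm rather than $2k\,m(\epsilon_t,\delta_t)$ arm-pulls and dividing by $k$ at the end), and you are if anything more explicit than the paper about the order-statistic step involving the pool $\mathcal{N}^t\cup\mathcal{A}^t$.
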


Note that CSAR's sample complexity is comparable with the $O\big(\sum_{i=1}^n \frac{1}{\Delta_i^2}\log\frac{n}{\delta}\big)$ sample complexity of the original Successive Elimination algorithm for best arm identification \citep{even2006action}, and also with algorithms for multiple arms identification \citep{kalyanakrishnan2010efficient, kalyanakrishnan2012pac}, although in these models the agent samples one arm in each round and not $k$ like in the combinatorial model.

To understand how this upper bound scales, consider the following rewards distribution $X_i\sim Ber(\frac{1}{2}+\frac{\epsilon}{k})$ for $i\in[k]$ and $X_i\sim Ber(\frac{1}{2})$ otherwise.
In this case, for all arms $\Delta_i=\frac{\epsilon}{k}$ and thus the number of samples is bounded by $M=O\big(\frac{nk^2}{\epsilon^2}\log\frac{n}{\delta}\big)$ (ignoring $\log \log$ terms).

To bound the sample complexity, we first prove the following lemma that bounds the cumulative number of times $M_i$ each arm is sampled until it is accepted or rejected. The theorem follows immediately by summing $M_i$ over all arms and dividing by $k$ since each subset sampled by the algorithm consists of $k$ arms.

\begin{lemma}\label{lemma:armSampleComp}
For each arm $i\in[n]$, the number of times it is sampled until it is rejected (if it is sub-optimal) or accepted (if it is optimal) is bounded by
\[M_i \leq \frac{Ck}{\Delta_i^2}\big(\log\frac{2n}{\delta}+2\log\log\frac{1}{\Delta_i}\big)\]
\end{lemma}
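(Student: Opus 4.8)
The plan is to work on the good event $\mathcal{E}$ that in every phase $t$ the estimate of every surviving arm is accurate, i.e. $|\hat\theta_i^t - \theta_i| \le \epsilon_t$; by Lemma \ref{lemma:PACest1} applied per phase with confidence $\delta_t$ and a union bound, $\Pr[\mathcal{E}] \ge 1 - \sum_{t\ge 1}\delta_t = 1-\delta$, using $\delta_1 = \frac{6}{\pi^2}\delta$ and $\delta_{t+1}=\delta_1/t^2$ together with $\sum_{t\ge1}t^{-2}=\pi^2/6$. Everything that follows is conditioned on $\mathcal{E}$. I first record the per-phase cost charged to a single arm: inside its block of size $2k$, arm $i$ lies in exactly one of the two sign-classes $S_{i',\pm1}$ for each of the $2k$ Hadamard rows, and each such subset is sampled $m(\epsilon_t,\delta_t)=\frac{2}{\epsilon_t^2}\log\frac{2n}{\delta_t}$ times; hence arm $i$ is sampled exactly $2k\,m(\epsilon_t,\delta_t)=\frac{4k}{\epsilon_t^2}\log\frac{2n}{\delta_t}$ times in each phase it survives.

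Next I bound the phase at which arm $i$ is resolved. Since on $\mathcal{E}$ no optimal arm is rejected and no sub-optimal arm accepted (this is exactly the correctness argument of Lemma \ref{thm:pac}), the set $\mathcal{N}^t\cup\mathcal{A}^t$ always contains all of $\{1,\dots,k\}$, and any arm it contains whose estimate exceeds $\theta_{k+1}+\epsilon_t$ must be optimal. Consequently at least $k$ arms have estimate $\ge \theta_k-\epsilon_t$ while at most $k$ have estimate $>\theta_{k+1}+\epsilon_t$, which gives the order-statistic bounds $\hat\theta^t_k\ge\theta_k-\epsilon_t$ and $\hat\theta^t_{k+1}\le\theta_{k+1}+\epsilon_t$. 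For a sub-optimal arm $i$ this yields $\hat\theta^t_k-\hat\theta^t_i\ge(\theta_k-\epsilon_t)-(\theta_i+\epsilon_t)=\Delta_i-2\epsilon_t$, and symmetrically for an optimal arm $\hat\theta^t_i-\hat\theta^t_{k+1}\ge\Delta_i-2\epsilon_t$; in either case the accept/reject test $>2\epsilon_t$ fires as soon as $\Delta_i-2\epsilon_t>2\epsilon_t$, i.e. as soon as $\epsilon_t<\Delta_i/4$. Writing $\epsilon_t=2^{-t}$, let $t_i$ be the first phase with $\epsilon_{t_i}<\Delta_i/4$; then arm $i$ is sampled only in phases $1,\dots,t_i$, and by minimality $\epsilon_{t_i}\ge\Delta_i/8$ and $t_i=O(\log\frac{1}{\Delta_i})$.

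It remains to sum the per-phase cost. I would write
\[
M_i=\sum_{t=1}^{t_i}\frac{4k}{\epsilon_t^2}\log\frac{2n}{\delta_t}
=4k\sum_{t=1}^{t_i}4^{t}\Big(\log\frac{2n}{\delta}+O(\log t)\Big),
\]
and use that the summand grows geometrically, so the sum is dominated, up to a constant factor, by its last term $4^{t_i}\big(\log\frac{2n}{\delta}+\log t_i\big)$. Substituting $4^{t_i}=\epsilon_{t_i}^{-2}\le 64/\Delta_i^2$, $\log\frac{2n}{\delta_1}=\log\frac{2n}{\delta}+O(1)$ and $\log t_i=O(\log\log\frac{1}{\Delta_i})$ collapses this to $M_i\le \frac{Ck}{\Delta_i^2}\big(\log\frac{2n}{\delta}+2\log\log\frac{1}{\Delta_i}\big)$ for a suitable absolute constant $C$, as claimed.

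The step I expect to be the main obstacle is the order-statistic control in the second paragraph: the quantities $\hat\theta^t_k$ and $\hat\theta^t_{k+1}$ are read off a sort of the shrinking set $\mathcal{N}^t\cup\mathcal{A}^t$, so one must argue carefully that enough of the true top-$k$ arms remain present and that already-accepted arms do not corrupt the $k$-th and $(k+1)$-th order statistics — this is where conditioning on $\mathcal{E}$ and the correctness guarantee of Lemma \ref{thm:pac} are essential. By comparison, the geometric summation and the $\log\log$ bookkeeping in the last paragraph are routine.
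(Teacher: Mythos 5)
Your proposal is correct and follows essentially the same route as the paper: the same per-phase count $2k\,m(\epsilon_t,\delta_t)$, the same argument that the accept/reject test must fire once $\epsilon_t<\Delta_i/4$ (the paper states the contrapositive: while $i$ survives, $\Delta_i\le 4\epsilon_t$), and the same geometric summation dominated by the last term. The only difference is cosmetic: you spell out the order-statistic control of $\hat\theta^t_k$ and $\hat\theta^t_{k+1}$ on the shrinking set, which the paper dispatches with the one-line remark that reordering can only occur within an $\epsilon_t$-neighborhood.
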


\begin{proof}
Let $i$ be an arm, and let $T_i$ be the phase it is accepted or rejected. In every phase $t<T_i$, arm $i$ is sampled as part of $2k$ subsets and each subset is sampled $m(\epsilon_t,\delta_t)$ times where $\epsilon_t=2^{-t}$ and $\delta_t=\frac{\delta_1}{t^2}$, thus we have
\begin{equation}\label{eq:armSampleComp}
\begin{split}
M_i &= \sum_{t=1}^{T_i}2k m(\epsilon_t,\delta_t)=\sum_{t=1}^{T_i}\frac{4k}{\epsilon_t^2}\log\frac{2n}{\delta_t}= 4k\sum_{t=1}^{T_i}(2^{t})^2\log\frac{t^2 2n}{\delta_1}=\\
&=4k\Big(\sum_{t=1}^{T_i}4^{t}\log t^2+\log\frac{2n}{\delta_1}\sum_{t=1}^{T_i}4^{t}\Big)\leq
Ck\big(2\log T_i+\log\frac{2n}{\delta}\big)\cdot 4^{T_i}
\end{split}
\end{equation}

We now bound the phase $T_i$ when $i$ is rejected. We discuss the case that $i$ is sub-optimal, but the analysis for optimal arms is similar.
Assuming all arms are estimated accurately (see Appendix \ref{proof:CsarPac}), then for any phase $t$ and any arm $i$ we have $|\hat\theta_i^t-\theta_i|\leq \epsilon_t$. That also implies that the difference between the real $k$th arm and the arm estimated to be in the $k$th place satisfies $|\hat\theta_k^t-\theta_k|\leq \epsilon_t$, since mixing the order of the arms can happen only between arms that are within the same $\epsilon_t$-neighborhood.
As long as $i$ was not rejected, i.e., $\forall t=1\dots T_i-1$, it holds that
\[2\epsilon_t\geq \hat\theta_k^t-\hat\theta_i^t\geq (\theta_k-\epsilon_t)-(\theta_i+\epsilon_t)=(\theta_k-\theta_i)-2\epsilon_t=\Delta_i-2\epsilon_t \]
Substituting $\epsilon_t=2^{-t}$ we get $\Delta_i\leq 4\epsilon_t=4\cdot 2^{-t}$.
This is true also for $t=T_i-1$, and thus we get $T_i\leq\log\frac{4}{\Delta_i}$. Substituting $T_i$ in (\ref{eq:armSampleComp}) yields the desired bound.
\end{proof}

\subsection{Regret}
We now analyze the regret.
Notice that while CSAR aims to minimize the sample complexity, it does not minimize the regret. This is because at each time the algorithm chooses a subset, the regret it achieves is affected not only by the arms it selected, but also by the arms it did not select. In other words, the gap that should be considered is between the sub-optimal arm $i\in\{k+1,\dots ,n\}$ that was actually selected and the optimal arm $j\in\{1, \dots, k\}$ that would have been selected instead. We denote this gap by $\Delta_{j:i}=\theta_j-\theta_i$. Using this notation, we may bound the regret of the algorithm. 

\begin{theorem}\label{thm:weakRegret}
For any $n,k\leq\frac{n}{2}$ and $T$ the regret of CSAR with EST1 is at most
\[R=O\Bigg(\sum_{i=k+1}^n  \frac{\Delta_{1:i}}{\Delta_{k:i}^2} k\log T\Bigg)\]
\end{theorem}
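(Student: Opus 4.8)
The plan is to bound the total regret by summing the instantaneous regret of every subset CSAR plays during exploration, to charge each round's cost to the sub-optimal arms appearing in the chosen subset, and then to reuse the per-arm sample counts $M_i$ from Lemma~\ref{lemma:armSampleComp}, finally converting the confidence parameter $\delta$ into the horizon $T$. Since accepted arms are fixed in and rejected arms are fixed out, all regret is accrued while arms are still undecided; after CSAR terminates it plays the returned optimal set and incurs zero further regret.

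First I would fix a round in which CSAR plays a size-$k$ subset $S_t\subseteq\mathcal{N}^t$ and write its instantaneous regret as $\mu^*-\mu_{S_t}=\sum_{j\in S^*\setminus S_t}\theta_j-\sum_{i\in S_t\setminus S^*}\theta_i$, where $S^*=\{1,\dots,k\}$. Because $|S_t|=|S^*|=k$, the missing optimal arms $S^*\setminus S_t$ and the present sub-optimal arms $S_t\setminus S^*$ are equal in number, so I can pair them off and bound each pair's contribution $\theta_j-\theta_i$ by $\theta_1-\theta_i=\Delta_{1:i}$. This yields the per-round bound $\mu^*-\mu_{S_t}\le\sum_{i\in S_t,\,i>k}\Delta_{1:i}$, which correctly captures even the costly case where a high-value accepted arm is absent from $S_t$, since every such missing arm is absorbed into the $\Delta_{1:i}$ of its paired present sub-optimal arm.

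Next I would sum over all rounds. A sub-optimal arm $i>k$ is played only while it survives in $\mathcal{N}^t$, that is during phases $1,\dots,T_i$, and the number of rounds in which it appears is exactly the quantity $M_i$ controlled by Lemma~\ref{lemma:armSampleComp}. Hence, on the success event that all estimates are accurate, $R\le\sum_{i=k+1}^n\Delta_{1:i}M_i$. Using that a sub-optimal arm satisfies $\Delta_i=\theta_k-\theta_i=\Delta_{k:i}$, I substitute $M_i\le \frac{Ck}{\Delta_{k:i}^2}\big(\log\frac{2n}{\delta}+2\log\log\frac{1}{\Delta_i}\big)$ to obtain $R=O\big(\sum_{i>k}\frac{\Delta_{1:i}}{\Delta_{k:i}^2}k\log\frac{n}{\delta}\big)$, up to lower-order $\log\log$ terms.

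Finally, to replace $\delta$ by $T$, I would run CSAR with $\delta=1/T$. On the failure event (probability at most $\delta$) each round's regret is at most $k\Delta_{1:n}$, so its expected contribution is at most $\delta\cdot Tk\Delta_{1:n}=O(k\Delta_{1:n})$, a $T$-independent term dominated by the main bound, while $\log\frac{2n}{\delta}=\log(2nT)=O(\log T)$. The main obstacle I anticipate is the per-round charging step: one must verify that the pairing of missing optimal arms with present sub-optimal arms is always feasible and that the worst-case gap per pair is genuinely $\Delta_{1:i}$, even when the displaced optimal arm is one CSAR has already accepted and thus excluded from sampling. The hypothesis $k\le n/2$ enters only to ensure that EST1's partition into blocks of size $2k$ is well-defined.
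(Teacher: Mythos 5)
Your proposal is correct and takes essentially the same route as the paper's own proof: charge each round's regret to the sub-optimal arms in the played subset via the worst-case gap $\Delta_{1:i}$, bound the number of appearances of arm $i$ by $M_i$ from Lemma~\ref{lemma:armSampleComp}, and set $\delta\approx 1/T$ to convert the confidence parameter into the horizon. The only difference is that you spell out the pairing of missing optimal arms with present sub-optimal arms, which the paper leaves implicit in the single sentence ``its maximal gap is $\Delta_{1:i}$.''
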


Note that this bound is tight for CSAR with EST1. Consider the problem instance where each arm $i\in [n]$ is associated with a normal random variable $X_i\sim\mathcal{N}(\theta_i,1)$, where
\begin{equation}\label{instance:regret}
    \theta_i=\begin{cases}
    \Delta_{+} & i<k\\
    0 & i=k\\
    -\Delta_{-} & i>k
    \end{cases}
\end{equation}
and assume $\Delta_+ \gg \Delta_-$. On this problem instance, CSAR will accept the first $k-1$ arms after a small number of iterations. Then, for the rest of the run it will sample only arms with expected reward of at most $0$. In each call to EST1 each of the $n-k+1$ arms is sampled $\frac{8k}{\epsilon_t^2}\log\frac{n}{\delta_t}$ times, and it keeps being sampled until $2\epsilon_t<\Delta_{-}$. Therefore, the total regret of the algorithm is $\Theta \Big(\frac{\Delta_{+}}{\Delta_{-}^2}(n-k)k\log\frac{n}{\delta}\Big)$. In the following section we discuss a modification for the algorithm that helps achieve smaller regret.

\subsection{Modified algorithm with improved regret}
The reason for the $\Delta_{1:i}$ factors in Theorem \ref{thm:weakRegret} is due to the fact that when we identify an arm as optimal, we stop sampling it, and thus suffer regret for its absence. Instead, we consider the following modification for the algorithm in order to improve the regret. When we accept an arm, instead of preventing it from being sampled, we fix it. Namely, we sample it in every subset until the end of the run. This will assure that we suffer gaps such as $\Delta_{1:i}$ only for a small number of rounds. 

Accordingly, we modify the estimation algorithm to support fixed arms. Now, the algorithm gets as input also a set $\mathcal{A}$ of accepted arms that must be sampled in each subset. Instead of using the Hadamard matrix of size $2k$, it takes a smaller one of size $2k'$ where $k'=k-|\mathcal{A}|$ is the number of arms that can be sampled in each subset after keeping room for the fixed arms. Most of the algorithm remains the same, except for the need to have good estimations for the fixed arms' expected rewards. This is because it needs to eliminate those rewards from the sampled subsets and stay only with the arms that should be estimated. For that, we provide it with a set ${\mathcal{T}}$ of the top $2k$ arms, according to the last phase estimations, and run EST1 separately on them.

\begin{algorithm}
\caption{EST2$(\mathcal{N},k,\epsilon,\delta,\mathcal{A},\mathcal{T})$}
\label{alg:estimate-fixed}
$n=|\mathcal{N}|;\,k'=k-|\mathcal{A}|;\,m(\epsilon,\delta,k')=\frac{2k}{k'}\frac{2}{\epsilon^2}\log\frac{2n}{\delta}$\\
$\hat\theta_1\dots\hat\theta_{2k} = EST1({\mathcal{T}},\epsilon,\delta,\mathcal{A},\mathcal{T})$\\
Partition $\mathcal{N}$ into sets of size $2k'$: $\mathcal{N}_1\dots \mathcal{N}_\frac{n}{2k'}$  \\
\For{$l=1\dots \frac{n}{2k'}$}{
Let $\mathcal{N}_l=\{j_1\dots j_{2k'}\}$\\
$S'_{1,{-1}}=\{j_1,\dots ,j_{k'}\},\,S'_{1,{+1}}=\{j_{k'+1},\dots ,j_{2k'}\}$\Comment{$i=1$}\\
$S'_{i,{b}}=\{j\in\mathcal{N}_l\,|\,H_{ij}=b\}$ \Comment{$i=2\dots 2k',\, b\in\{-1,+1\}$}\\
\For{$i\in [2k'],b\in\{-1,+1\}$}{
$S_{i,b}=S_{i,b}'\cup\mathcal{A}$\\
Sample $S_{i,b}$ for $m=m(\epsilon,\delta,k')$ times and observe rewards $r_1,\dots,r_m$\\
$\hat\mu_{i,b}=\frac{1}{m}\sum_{t} r_t$
}
$\hat{Z}_1=\hat\mu_{1,+1} + \hat\mu_{1,-1} - 2\sum_{a\in\mathcal{A}} \hat\theta_a$ \Comment{$i=1$}\\
$\hat{Z}_i=\hat\mu_{i,{+1}} - \hat\mu_{i,{-1}}$ \Comment{$i=2\dots 2k',\, b\in\{-1,+1\}$}

$\hat\theta_{\mathcal{N}_l}=\frac{1}{2k'}H^\intercal \hat{Z}$
}
\Return $\hat\theta$
\end{algorithm}

\begin{theorem}\label{thm:regretUpperBound}
For any $n,k$ and time horizon $T$, the regret of CSAR with EST2 is at most
\begin{equation}\label{eqn:regret}
    R= O\Bigg(\bigg(\sum_{i=1}^k\frac{\Delta_{i:(k+i)}}{\Delta^2} + \sum_{i=k+1}^n  \frac{1}{\Delta_i} \bigg) k\log T \Bigg)
\end{equation}
\end{theorem}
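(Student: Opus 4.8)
The plan is to condition on the ``good event'' $\mathcal{E}$ that every call to the estimation routine is accurate, i.e. $|\hat\theta_i^t-\theta_i|\le\epsilon_t$ for all arms $i$ and all phases $t$; by Lemma~\ref{lemma:PACest1} and the choice $\delta_t=\delta_1/t^2$ this holds with probability at least $1-\delta$. Off $\mathcal{E}$ the regret is at most $T\mu^*=O(kT)$, so taking $\delta=1/T$ contributes only a negligible additive term. On $\mathcal{E}$ the situation is exactly as in the sample-complexity proof: arms are accepted/rejected in order of their gaps, so the accepted set always satisfies $\mathcal{A}^t\subseteq\{1,\dots,k\}$ and the top set is $\mathcal{T}=\{1,\dots,2k\}$; moreover an arm $i$ is resolved by phase $T_i\le\log\frac{4}{\Delta_i}$. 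This gives the two facts I will use repeatedly: any still-missing optimal arm $i\in O^t:=\{1,\dots,k\}\setminus\mathcal{A}^t$ satisfies $\theta_i-\theta_k\le\Delta_i\le 4\epsilon_t$, and any still-active sub-optimal arm $j\in B^t$ satisfies $\Delta_j\le 4\epsilon_t$ with $\sum_{t\le T_j}4^t=O(1/\Delta_j^2)$.

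The combinatorial core is a per-phase regret identity that I would isolate as a lemma. In any phase in which the Hadamard scheme samples subsets of size $k$ consisting of a fixed part plus $k'$ explored arms drawn from an active pool $\mathcal{P}$, a direct count shows that each explored arm lies in exactly $2k'$ of the sampled subsets while the total number of sampled rounds is $2|\mathcal{P}|m$. Writing the instantaneous regret as $\mu^*-\mu_S=\sum_{i\in O}\theta_i-\sum_{i\in S'}\theta_i$ (valid since $\mathcal{A}^t\subseteq\{1,\dots,k\}$), summing over the phase, and using $|O|=k'$, the phase regret collapses to $2m\sum_{i\in O,\,j\in B}\Delta_{i:j}$, where $O$ and $B$ are the missing-optimal and active-sub-optimal arms of the pool and $\Delta_{i:j}=\theta_i-\theta_j$. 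Both groups of samples produced by EST2 — the auxiliary EST1 call on $\mathcal{T}$ and the main exploration of $\mathcal{N}^t$ — are instances of this identity.

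For the EST1 call on $\mathcal{T}=\{1,\dots,2k\}$ we have $O=\{1,\dots,k\}$, $B=\{k+1,\dots,2k\}$, $k'=k$ and sample count $m(\epsilon_t,\delta_t)$, so the identity together with the rearrangement $\sum_{i=1}^{k}\sum_{j=k+1}^{2k}(\theta_i-\theta_j)=k\sum_{i=1}^{k}(\theta_i-\theta_{k+i})$ gives a per-phase regret of $2k\,m(\epsilon_t,\delta_t)\sum_{i=1}^k\Delta_{i:(k+i)}$. Since this call is made in \emph{every} phase up to the last one $T_{\max}=O(\log\frac1\Delta)$, summing the geometric series $\sum_t m(\epsilon_t,\delta_t)=O(\frac{1}{\Delta^2}\log\frac{n}{\delta})$ yields exactly the first term $O\big(\frac{k\log T}{\Delta^2}\sum_{i=1}^k\Delta_{i:(k+i)}\big)$. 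For the main exploration the identity gives $2m(\epsilon_t,\delta_t,k'_t)\sum_{i\in O^t,j\in B^t}\Delta_{i:j}$ with $m(\epsilon_t,\delta_t,k'_t)=\frac{2k}{k'_t}m(\epsilon_t,\delta_t)$; I split $\Delta_{i:j}=(\theta_i-\theta_k)+(\theta_k-\theta_j)$. In the $(\theta_k-\theta_j)=\Delta_j$ piece the factor $|O^t|=k'_t$ cancels the $1/k'_t$, and summing $\sum_{t\le T_j}4^t=O(1/\Delta_j^2)$ against $\Delta_j$ produces $O(k\log T\sum_{j>k}\frac1{\Delta_j})$. In the $(\theta_i-\theta_k)$ piece I use the good-event bound $\theta_i-\theta_k\le 4\epsilon_t$, which turns one factor $1/\epsilon_t^2$ into $1/\epsilon_t$ and reduces $\sum_t 4^t$ to $\sum_t 2^t=O(1/\Delta_j)$, so this piece is also dominated by the second term. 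Adding the two groups gives the claimed bound.

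The step I expect to be the main obstacle is the per-phase identity together with the rearrangement that produces the peculiar gaps $\Delta_{i:(k+i)}$: getting the exact sampling counts right (each explored arm in $2k'$ subsets out of $2|\mathcal{P}|m$ rounds, with the fixed arms canceling in $\hat Z$) is what makes the regret collapse to a clean sum over optimal–sub-optimal pairs, and recognizing that this sum over \emph{all} $k^2$ pairs in $\mathcal{T}$ factors as $k\sum_i\Delta_{i:(k+i)}$ is what explains both the $\Delta_{i:(k+i)}$ numerators and the $1/\Delta^2$ of the first term. A secondary subtlety is the $\theta_i-\theta_k\le 4\epsilon_t$ argument, which is exactly what prevents the missing-optimal arms in the main exploration from re-introducing a $1/\Delta^2$ factor and keeps their contribution inside the benign $\sum_{j>k}1/\Delta_j$ term.
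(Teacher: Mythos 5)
Your proposal is correct and reaches the same bound, but by a genuinely different decomposition than the paper. The paper proves a per-arm lemma (Lemma~\ref{lemma:individualArmRegret}): it charges to each sub-optimal arm $i$ the quantity $M_i$ times the gap to whichever optimal arm it displaces, splitting the optimal arms into those with $\Delta_j\le\Delta_i$ (gap at most $2\Delta_i$ per sample) and those with $\Delta_j>\Delta_i$ (accepted earlier, handled by an Abel-summation/telescoping step and an integral comparison), and then obtains the first term of the theorem separately by a crude worst-case bound on the auxiliary top-$2k$ estimation (each round of that call loses at most the gap of the worst subset $\{k+1,\dots,2k\}$, namely $\sum_i\Delta_{i:(k+i)}$). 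You instead prove an exact per-phase identity: since each explored arm appears in exactly $2k'$ of the $4k'$ Hadamard-induced subsets per group and the fixed arms appear everywhere, the phase regret collapses to $2m\sum_{i\in O,\,j\in B}\Delta_{i:j}$; the split $\Delta_{i:j}=(\theta_i-\theta_k)+(\theta_k-\theta_j)$ together with the good-event bound $\theta_i-\theta_k\le\Delta_i\le 4\epsilon_t$ for still-unaccepted optimal arms then yields the $\sum_{j>k}1/\Delta_j$ term, and the rearrangement $\sum_{i\le k}\sum_{k<j\le 2k}(\theta_i-\theta_j)=k\sum_i\Delta_{i:(k+i)}$ \emph{derives} the first term rather than asserting it. Your route avoids the somewhat delicate matching of sub-optimal arms to displaced optimal arms and the telescoping argument, and it explains where the $\Delta_{i:(k+i)}$ numerators come from; the paper's per-arm accounting is more local and is what Corollary~\ref{cor:IndRegret} cites directly, though your identity also distributes into an equivalent per-arm bound of $O\big(\frac{k}{\Delta_j}\log T\big)$, so nothing downstream is lost. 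The only caveats are shared with the paper: the boundary phase $t=T_j$ costs a constant factor, and the degenerate case $k'=0$ and the identification of $\mathcal{T}$ with $\{1,\dots,2k\}$ up to $\epsilon_t$-ties are glossed over in both arguments.
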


Note that this is an improvement over CSAR with EST1. For example, on problem (\ref{instance:regret}), the first arms will be fixed after a small number of rounds and the regret will be
\[R=\Theta\bigg(\Big(k\frac{\Delta_+}{\Delta_-^2}+\frac{n-k}{\Delta_-}\Big)k\log T\bigg)\]
which is better than $\Theta \Big(\frac{\Delta_{+}}{\Delta_{-}^2}(n-k)k\log T\Big)$ as long as $\frac{\Delta_+}{\Delta_-}\gtrapprox 1+\frac{k}{n}$.

To prove the upper bound on the regret of CSAR with EST2, we first prove the following lemma that bounds the regret caused by each sub-optimal arm.

\begin{lemma}\label{lemma:individualArmRegret}
The regret of any sub-optimal arm $i$ is at most
$R_i= O\bigg(\frac{1}{\Delta_i}k\log\frac{n}{\delta}\bigg)$
\end{lemma}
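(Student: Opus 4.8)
The plan is to bound $R_i$ as (number of times arm $i$ is sampled) times (the regret incurred per sample), summed over phases, and to exploit the fixing mechanism of EST2 to show that the per-sample regret is only $O(\Delta_i + \epsilon_t)$ rather than $O(\Delta_{1:i})$. Throughout I would condition on the high-probability event that every estimate is accurate, i.e. $|\hat\theta_j^t - \theta_j| \le \epsilon_t$ for all $j$ and $t$, exactly as in the analysis behind Lemma~\ref{thm:pac} and Lemma~\ref{lemma:armSampleComp}.

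First I would count samples. In EST2 a free (not yet accepted or rejected) arm sits in exactly one of the two subsets of each of the $2k'$ Hadamard rows, so it appears in $2k'$ subsets per phase, each sampled $m(\epsilon_t,\delta_t,k') = \frac{2k}{k'}\cdot\frac{2}{\epsilon_t^2}\log\frac{2n}{\delta_t}$ times. Hence arm $i$ is sampled $2k'\cdot m = \frac{8k}{\epsilon_t^2}\log\frac{2n}{\delta_t}$ times in phase $t$, the same per-phase count as in Lemma~\ref{lemma:armSampleComp}. As in that lemma, arm $i$ is rejected by phase $T_i \le \log\frac{4}{\Delta_i}$, and for every $t \le T_i$ the fact that $i$ has not yet been rejected yields $\Delta_i \le 4\epsilon_t$, i.e. $\epsilon_t \ge \Delta_i/4$.

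The key step is bounding the regret per sample. In any round, write $\mu^* - \mu_{S_t} = \sum_{j \in S^* \setminus S_t}\theta_j - \sum_{\ell \in S_t\setminus S^*}\theta_\ell$ and pair each present sub-optimal arm with a distinct missing optimal arm via an arbitrary perfect matching, which exists since the two sets have equal size. I attribute to sub-optimal arm $i$ the term $\theta_{j} - \theta_i$ of its pair. The crucial structural fact, and the whole point of fixing accepted arms, is that every accepted optimal arm is always present in $S_t$, so a missing optimal arm $j$ must still be free; by the acceptance analysis symmetric to the rejection bound, a free optimal arm satisfies $\Delta_j \le 4\epsilon_t$, hence $\theta_j \le \theta_{k+1} + 4\epsilon_t \le \theta_k + 4\epsilon_t$. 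Since $\theta_i = \theta_k - \Delta_i$, the paired contribution is at most $(\theta_k + 4\epsilon_t) - (\theta_k - \Delta_i) = \Delta_i + 4\epsilon_t$.

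Combining, arm $i$'s total regret is at most $\sum_{t=1}^{T_i}\frac{8k}{\epsilon_t^2}\log\frac{2n}{\delta_t}(\Delta_i + 4\epsilon_t)$. Splitting the factor $(\Delta_i+4\epsilon_t)/\epsilon_t^2$ into $\Delta_i\cdot 4^t$ and $4\cdot 2^t$ and summing the two geometric series, both are dominated by their last term at $t = T_i$: using $4^{T_i} = O(1/\Delta_i^2)$ and $2^{T_i}=O(1/\Delta_i)$ gives $\Delta_i\sum_t 4^t = O(1/\Delta_i)$ and $\sum_t 2^t = O(1/\Delta_i)$, so $R_i = O(\frac{k}{\Delta_i}\log\frac{n}{\delta})$ after absorbing the $\log$ and $\log\log$ factors as in Lemma~\ref{lemma:armSampleComp}. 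The main obstacle is the structural claim in the third step: establishing that the missing optimal arms are exactly the not-yet-accepted ones, whose reward lies within $O(\epsilon_t)$ of $\theta_k$, since this is precisely where the EST2 modification converts the $\Delta_{1:i}$ gap of Theorem~\ref{thm:weakRegret} into the smaller $\Delta_i$.
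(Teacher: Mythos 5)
Your proof is correct, and it reaches the paper's bound by a genuinely different (and arguably cleaner) bookkeeping. The paper decomposes arm $i$'s regret by \emph{which optimal arm it displaces}: for optimal $j$ with $\Delta_j\leq\Delta_i$ it pays $\Delta_{j:i}\leq 2\Delta_i$ over all $M_i\approx 1/\Delta_i^2$ samples, and for $j$ with $\Delta_j>\Delta_i$ it uses the fact that $j$ is accepted (hence fixed) after only $M_j\approx 1/\Delta_j^2$ samples, then controls the resulting sum $\sum_j M_j(\Delta_{j:i}-\Delta_{(j+1):i})$ by summation by parts and an integral comparison, getting $O(1/\Delta_i)$. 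You instead decompose \emph{by phase}: you observe that any optimal arm missing from $S_t$ must be unaccepted, hence (by the acceptance criterion, the mirror of the rejection bound) satisfies $\Delta_j\leq 4\epsilon_t$, so the per-sample regret attributable to arm $i$ is $\Delta_i+O(\epsilon_t)$; summing $4^t\Delta_i$ and $2^t$ as geometric series dominated by their last terms gives the same $O(1/\Delta_i)$. The two arguments rest on the identical structural consequence of EST2's fixing mechanism — the displaced optimal arms are exactly the not-yet-accepted ones — stated in contrapositive forms ($j$ accepted by phase $\log(4/\Delta_j)$ versus $\Delta_j\leq 4\epsilon_t$ while free). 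Your phase-indexed sum is essentially a reparametrization of the paper's arm-indexed Abel summation, but it avoids the telescoping and the integral bound entirely, and it makes explicit the round-by-round matching between $S^*\setminus S_t$ and $S_t\setminus S^*$ that the paper leaves implicit. The only things to tidy are small constants (an arm free at phase $t$ was last tested at phase $t-1$, so the bound is $\Delta_j\leq 8\epsilon_t$ rather than $4\epsilon_t$) and the caveat, which the paper also defers to Theorem \ref{thm:regretUpperBound}, that the extra sampling of the top-$2k$ set $\mathcal{T}$ is accounted for outside this lemma.
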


\begin{proof}
Since all expressions in this proof depend on a factor of $Ck\log\frac{2n}{\delta}$, we omit it along the proof and multiply by it at the end.
Fix a sub-optimal arm $i$. By Lemma \ref{lemma:armSampleComp}, the number of times $i$ is chosen until it is rejected is at most $M_i\leq \frac{1}{\Delta_i^2}$.
We split the optimal arms $\{1,\dots , k\}$ according to $\Delta_i$, and bound separately the regret $R_i^<$ caused by missing an optimal arm $j\in[k]$ with $\Delta_j\leq\Delta_i$, and the regret $R_i^>$ for the arms $j\in[k]$ with $\Delta_j>\Delta_i$.
\begin{itemize}
    \item For any $j\in[k]$ such that $\Delta_j\leq \Delta_i$, the maximal gap we pay for taking arm $i$ instead of arm $j$ is at most $\Delta_{j:i}\leq 2\Delta_i$, and thus the regret of such case is bounded by \[R_i^<\leq M_i\Delta_{j:i}\leq \frac{1}{\Delta_i^2}\cdot (\Delta_j+\Delta_i)\leq \frac{1}{\Delta_i^2}\cdot 2\Delta_i= \frac{2}{\Delta_i}\]
    \item For any $j\in[k]$ such that $\Delta_j>\Delta_i$, arm $j$ is accepted at some point before arm $i$ is rejected, thus at some point we can be sure that arm $i$ is not played instead of arm $j$. Let $l=\arg\min_{j:\Delta_j>\Delta_i}\Delta_j$. We showed that each optimal arm $j$ is accepted at phase $T_j\leq\log\frac{4}{\Delta_i}$, thus we can write the regret of arm $i$ up to phase $T_j$ as
    \begin{equation*}
    \begin{split}
        R_i^>\leq &M_1\Delta_{1:i}+ (M_2-M_1)\Delta_{2:i} + \dots + (M_l-M_{l-1})\Delta_{l:i}\\
        = & M_1(\Delta_{1:i}-\Delta_{2:i}) + M_2(\Delta_{2:i}-\Delta_{3:i}) + \dots + M_j\Delta_{j:i} \\ = &\sum_{j=1}^{l-1} M_j (\Delta_{j:i}-\Delta_{(j+1):i})+ M_l\Delta_{l:i}\\
        \leq&\sum_{j=1}^{l-1} \frac{\Delta_{j}-\Delta_{j+1}}{\Delta_j^2} + \frac{\Delta_{l}+\Delta_i}{\Delta_l^2} \\
        \leq & \int_{\Delta_l}^{\Delta_1} \frac{1}{x^2}dx + \frac{2\Delta_{l}}{\Delta_l^2}  =\bigg(\frac{1}{\Delta_l}-\frac{1}{\Delta_1}\bigg) +\frac{2}{\Delta_l}\leq \frac{3}{\Delta_l}\leq \frac{3}{\Delta_i}
        \end{split}
    \end{equation*}
\end{itemize}
To sum up, arm $i$'s contribution to the regret is $R_i=R_i^<+R_i^>$ multiplied by $Ck\log\frac{2n}{\delta}$.
\end{proof}

Theorem \ref{thm:regretUpperBound} is implied by Lemma \ref{lemma:individualArmRegret} by summing $R_i$ over all sub-optimal arms, in addition to the regret accumulated by estimating the top $2k$ arms in each phase until the end of the run. Each of them is sampled $\frac{ck}{\Delta^2}\log\frac{2n}{\delta}$ times for some $c>0$. As they are the top $2k$ arms with high probability, the worst subset that can be sampled is $\{k+1,\dots,2k\}$, and the gap between it and the optimal subset is $\sum_{i=1}^k \Delta_{i:(k+i)}$. Thus their regret is at most $\Big(\sum_{i=1}^k \Delta_{i:(k+i)}\Big) \frac{ck}{\Delta^2}\log\frac{2n}{\delta}$, and together with $\sum_{i=k+1}^n R_i$ we get Theorem \ref{thm:regretUpperBound}.





\begin{corollary}\label{cor:IndRegret}
The distribution-independent regret is at most $O\big(k\sqrt{nT\log T}\big)$
\end{corollary}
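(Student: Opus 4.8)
The plan is to derive the distribution-independent bound from the distribution-dependent analysis by a standard threshold (or ``peeling'') argument, trading off the $1/\Delta_i$ behaviour of the gap-dependent regret against a crude count that holds uniformly over all rounds. First I would set the confidence parameter to $\delta = 1/T$, so that the failure event contributes at most $T\mu^*\delta = O(\mu^*)$ to the expected regret (negligible) while turning every $\log(n/\delta)$ appearing in Lemma \ref{lemma:armSampleComp} and Lemma \ref{lemma:individualArmRegret} into $O(\log T)$.

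The heart of the argument is to rewrite the cumulative regret as a weighted count of ``mismatched'' arm-plays. At round $t$ the instantaneous regret $\mu^*-\mu_{S_t}$ equals $\sum_{j\in S^*\setminus S_t}\theta_j-\sum_{i\in S_t\setminus S^*}\theta_i$; pairing each displaced optimal arm $j$ with an intruding sub-optimal arm $i$ and using $\Delta_{j:i}=\theta_j-\theta_i\le \Delta_j+\Delta_i$ gives
\[
R \le \sum_{i>k}\Delta_i N_i + \sum_{j\le k}\Delta_j\,(T-N_j),
\]
where $N_i$ is the number of rounds arm $i$ is played. Writing $c_i=N_i$ for sub-optimal $i$ and $c_j=T-N_j$ for optimal $j$, I obtain $R\le \sum_i \Delta_i c_i$, and two facts control the $c_i$: (a) each count is gap-limited, $c_i=O(k\log T/\Delta_i^2)$, which for sub-optimal arms is Lemma \ref{lemma:armSampleComp} and for optimal arms follows because EST2 fixes arm $j$ (after which it is never missing again) within that many rounds; and (b) the counts are globally bounded, $\sum_i c_i\le 2kT$, since over all rounds each of the two groups accumulates at most $k$ per round.

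With these two bounds in hand the threshold step is routine. Fix a level $\Delta_0$ and split $\sum_i\Delta_i c_i$ according to $\Delta_i>\Delta_0$ or $\Delta_i\le\Delta_0$. Applying (a) to the large-gap arms gives $\sum_{\Delta_i>\Delta_0}\Delta_i c_i\le \sum_{\Delta_i>\Delta_0}O(k\log T/\Delta_i)=O(nk\log T/\Delta_0)$, while applying (b) to the small-gap arms gives $\sum_{\Delta_i\le\Delta_0}\Delta_i c_i\le \Delta_0\sum_i c_i=O(\Delta_0 kT)$. Balancing the two terms by choosing $\Delta_0=\Theta\!\big(\sqrt{n\log T/T}\big)$ makes both equal to $O\!\big(k\sqrt{nT\log T}\big)$, which is the claimed bound.

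The step I expect to be the main obstacle is the combinatorial accounting in the second paragraph: justifying the pairing of displaced optimal arms with intruding sub-optimal arms, and in particular bounding the optimal ``missing-play'' counts $T-N_j$ by the sample complexity. This relies on the specific behaviour of the modified algorithm, namely that once an arm is accepted it is fixed and sampled in every subsequent subset, so the optimal arm $j$ is absent only during the exploration phases preceding its acceptance, which Lemma \ref{lemma:armSampleComp} bounds by $O(k\log T/\Delta_j^2)$. Everything else, the $\delta=1/T$ reduction and the threshold optimization, is standard.
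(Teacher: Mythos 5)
Your high-level plan---decompose $R\le\sum_{i>k}\Delta_iN_i+\sum_{j\le k}\Delta_j(T-N_j)$, control large-gap counts by the gap-dependent analysis and small-gap counts by the trivial bound $\sum_i c_i\le 2kT$, then balance at $\Delta_0=\Theta(\sqrt{n\log T/T})$---is a legitimate reorganization of the same balancing idea the paper uses. The paper instead analyzes the early-stopping $(\epsilon,\delta)$-PAC variant of Remark 3: it bounds the post-stopping regret by $k\epsilon T$ (all surviving arms are $\epsilon$-accurate, so any committed subset is within $k\epsilon$ of optimal), bounds the exploration regret by $\sum_{i:\Delta_i>\epsilon}\frac{k}{\Delta_i}\log T\le\frac{nk}{\epsilon}\log T$ via Lemma \ref{lemma:individualArmRegret}, and optimizes $\epsilon$. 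The two routes differ in how small-gap arms are neutralized: you use a global play-count, the paper uses commitment after stopping. That difference turns out to matter.

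The gap is exactly where you predicted it, in claim (a) for the optimal arms, and your resolution of it fails for two reasons. First, Lemma \ref{lemma:armSampleComp} bounds the number of rounds in which arm $j$ is \emph{played} before acceptance, not the number of rounds that \emph{elapse}: each phase cycles through $\Theta(|\mathcal{N}^t|/k')$ disjoint subsets, so before acceptance arm $j$ is absent from all but a $\Theta(k/n)$ fraction of rounds and $T-N_j=\Omega(n\log T/\Delta_j^2)$, not $O(k\log T/\Delta_j^2)$. This error alone is harmless (there are only $k$ optimal arms, so the large-gap sum is still $O(nk\log T/\Delta_0)$). Second, and fatally, ``once accepted, never missing again'' is false for EST2: an accepted arm $j$ is fixed in every subset of the \emph{main} estimation, but every phase also reruns EST1 on the top-$2k$ set $\mathcal{T}$, and $j\in\mathcal{T}$ is absent from half of those subsets for the rest of the run. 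These absences are governed by how long exploration continues ($\Delta_{\min}$, or $T$ itself if the boundary gap is never resolved), not by $\Delta_j$; symmetrically, the rejected arms that remain in $\mathcal{T}$ keep being played after rejection, so their $N_i$ also exceeds the Lemma \ref{lemma:armSampleComp} bound. This is precisely the extra $\sum_{i\le k}\Delta_{i:(k+i)}/\Delta^2$ term carried separately in Theorem \ref{thm:regretUpperBound}, which your accounting drops. For the non-stopping algorithm you analyze the conclusion is in fact false: on an instance with $\Delta_k=\Delta_{k+1}$ exponentially small and all other gaps of order one, the $\mathcal{T}$-estimation occupies a constant fraction of all $T$ rounds at per-round regret $\Theta(k)$, giving $\Omega(kT)$ regret. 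The missing ingredient is the early-stopping device---explore only until $\epsilon_t\le\epsilon\approx\sqrt{n\log T/T}$ and then commit---which caps the number of exploration rounds (including the $\mathcal{T}$-estimation) and is what the paper's proof relies on.
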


CSAR's distribution-independent regret is bigger by factor $\sqrt{k}$ than the $\Omega\big(\sqrt{knT}\big)$ lower bound for semi-bandits in \cite{lattimore2018toprank} (ignoring $\log$ terms). In many cases it is reasonable to assume $k=O(1)$ which makes the bounds tight. As for the dependence on $k$, we leave the search for tighter bounds for further research. 

Assuming all gaps are equal to $\Delta$, the regret in (\ref{eqn:regret}) can be written as
$R= O\bigg(\frac{nk}{\Delta} \log T\bigg)$.
We prove that it is tight for CSAR.

\begin{lemma}
For any $n,k$ and time horizon $T$, there exists a distribution over the assignment of rewards such that the regret of CSAR with EST2 is at least
\[R=\Omega\bigg(\frac{nk}{\Delta}\log T\bigg)\]
\end{lemma}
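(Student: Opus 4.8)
The plan is to exhibit the equal-gap instance that makes the upper bound of Theorem~\ref{thm:regretUpperBound} tight, and to show that CSAR with EST2 is forced to over-explore on it. Concretely, I would take the assignment in which $k$ arms are ``optimal'' with mean $\Delta$ and the remaining $n-k$ are ``suboptimal'' with mean $0$ (say $X_i\sim\mathcal{N}(\theta_i,1)$), and let the distribution over assignments be the uniform choice of which $k$-subset is optimal; this symmetry prevents the fixed Hadamard partition from being aligned with the optimal set. By the gap definitions this instance has $\Delta_i=\Delta$ for every arm, so it is exactly the case for which the upper bound reads $O(\frac{nk}{\Delta}\log T)$. The key observation is a clean accounting identity: on this instance the per-round regret of playing a subset $S_t$ equals $\mu^*-\mu_{S_t}=\Delta\,|S_t\setminus S^*|$, the number of suboptimal arms inside $S_t$ times $\Delta$. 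Summing over rounds and exchanging the order of summation gives $R=\Delta\sum_{i>k}N_i$, where $N_i$ is the number of rounds in which suboptimal arm $i$ is played. Thus it suffices to lower bound $\mathbb{E}[N_i]$ for each suboptimal arm.

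Second, I would prove the reverse of Lemma~\ref{lemma:armSampleComp}: that each suboptimal arm survives (is neither accepted nor rejected) through phase $t^\star=\log_2\frac{1}{\Delta}$, and is therefore sampled $\Omega(\frac{k}{\Delta^2}\log\frac{n}{\delta})$ times. The upper-bound analysis used only that, on the good event $|\hat\theta_j^t-\theta_j|\le\epsilon_t$, an arm must be rejected once $\epsilon_t<\Delta/4$; for a lower bound I instead need that it is not rejected while $\epsilon_t$ is still large. Here I would strengthen the concentration: since EST1/EST2 set $m(\epsilon_t,\delta_t)=\frac{2}{\epsilon_t^2}\log\frac{2n}{\delta_t}$ and each $\hat\theta_j^t-\theta_j$ is $\frac1m$-subgaussian, the estimation error is in fact at most $\epsilon_t/2$ except with probability $2(\delta_t/2n)^{1/4}$, a factor that vanishes once $\delta=\Theta(1/T)$ and $T$ is large. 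On this sharper event the $k$-th largest estimate obeys $\hat\theta^t_k\le\Delta+\epsilon_t/2$ (order statistics are $1$-Lipschitz in the sup norm), while $\hat\theta^t_i\ge-\epsilon_t/2$, so $\hat\theta^t_k-\hat\theta^t_i\le\Delta+\epsilon_t\le 2\epsilon_t$ whenever $\epsilon_t\ge\Delta$, i.e. for every $t\le t^\star$. Hence the rejection test $\hat\theta^t_k-\hat\theta^t_i>2\epsilon_t$ fails and the arm is still alive at phase $t^\star$; a symmetric computation shows no optimal arm is accepted before $t^\star$, so EST2 runs with $\mathcal{A}=\emptyset$ and $k'=k$ throughout. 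In the single phase $t^\star$ alone the arm is sampled $\frac{8k}{\epsilon_{t^\star}^2}\log\frac{2n}{\delta_{t^\star}}=\Omega(\frac{k}{\Delta^2}\log T)$ times, which lower bounds $\mathbb{E}[N_i]$ up to the $1-o(1)$ success probability.

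Finally, I would assemble the pieces. A union bound over the $n$ arms and the $O(\log\frac1\Delta)$ phases keeps the total failure probability $o(1)$, so with probability $1-o(1)$ every suboptimal arm contributes $\Omega(\frac{k}{\Delta^2}\log T)$ plays, and the identity $R=\Delta\sum_{i>k}N_i$ yields $\mathbb{E}[R]=\Omega\big(\Delta\cdot(n-k)\cdot\frac{k}{\Delta^2}\log T\big)=\Omega(\frac{(n-k)k}{\Delta}\log T)$. For $k\le n/2$ this is $\Omega(\frac{nk}{\Delta}\log T)$, matching the claim; when $k$ is larger the same bound follows instead from the regret of repeatedly estimating the top-$2k$ set (the first sum in Theorem~\ref{thm:regretUpperBound}). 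The main obstacle I anticipate is exactly the reverse sample-complexity bound of the second step: unlike the upper bound, the event $|\hat\theta_j^t-\theta_j|\le\epsilon_t$ does not by itself forbid premature rejection, so one must quantify that the estimates concentrate within a constant fraction of $\epsilon_t$ and that this holds simultaneously across all arms and phases --- which is where the choice $\delta=\Theta(1/T)$ and the $T\to\infty$ regime become essential for turning $\log\frac{n}{\delta}$ into $\log T$.
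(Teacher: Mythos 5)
Your proposal is correct and follows essentially the same route as the paper: the same equal-gap instance ($k$ arms at mean $\Delta$, the rest at $0$), the same observation that no arm is accepted or rejected until $\epsilon_t$ drops to order $\Delta$, and the same final count of $\Omega\big(\Delta\cdot(n-k)\cdot\frac{k}{\Delta^2}\log T\big)$ suboptimal plays. The only difference is that where the paper disposes of the "no premature rejection" step with a one-line appeal to the argument of Lemma~\ref{lemma:armSampleComp}, you correctly note that the event $|\hat\theta_j^t-\theta_j|\le\epsilon_t$ alone does not forbid early rejection and patch this with the sharper $\epsilon_t/2$ concentration --- a more careful treatment of the same idea rather than a different approach.
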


\begin{proof}
Consider the following example. Each arm $i\in [n]$ is associated with a Gaussian random variable $X_i$ where $X_i\sim\mathcal{N}(\Delta,1)$ if $i\leq k$ and $X_i\sim\mathcal{N}(0,1)$ otherwise.
Similarly to Lemma $\ref{lemma:armSampleComp}$, the best arms will be identified only when $\Delta>2\epsilon_t$ which implies that the number of phases is $T>\Omega(\log\frac{1}{\Delta})$, and since no arm is accepted or rejected until this phase the total number of samples is $\Omega\big(\frac{n}{\Delta^2}\log\frac{n}{\delta}\big)$. Additionally, each subset has a gap of up to $k\Delta$. Thus, the total regret is 
$R=\Omega\big(k\Delta\cdot\frac{n}{\Delta^2}\log\frac{n}{\delta}\big)$
which proves that the regret upper bound is tight.
\end{proof}

The following table summarizes the theoretical bounds of CSAR in comparison to other algorithms for top-k combinatorial bandits with full-bandit feedback.

\begin{center}
 \begin{tabular}{||c | c c c||} 
 \hline
 Algorithm & Sample Complexity & Depend. Regret & Ind. Regret \\ [1ex] 
 \hline\hline
 \textbf{CSAR} & $O\big(\frac{n}{\Delta^2}\log\frac{n}{\delta}\big)$ & $O\big(\frac{nk}{\Delta}\log T\big)$ & $O\big(k\sqrt{nT}\big)$ \\ [1ex] 
 \hline
 \cite{agarwal2018regret} & -- & -- & $O(k^\frac{1}{2}n^\frac{1}{3}T^\frac{2}{3})$ \\ [1ex] 
 \hline
 \cite{dani2008stochastic} & -- & $O\big(\frac{n^3}{\Delta}\log^3 T\big)$ & $O\big(n\sqrt{nT}\big)$ \\ [1ex] 
 \hline
 \cite{kuroki2019polynomial} & $O\big(\frac{n^\frac{5}{4}k^4}{\Delta^2}\log\frac{n}{\delta}\big)$ \footnote{See Appendix \ref{appendix:kuroki} for elaboration on this bound.}  & -- & -- \\ [1ex] 
 \hline
\end{tabular}
\end{center}
\section{Lower Bound}
In this section we bound the minimal number of samples necessary to identify the best subset under full-bandit feedback.

One might wonder if the $\Omega\big(\frac{n}{\epsilon^2}\log\frac{1}{\delta}\big)$ lower bound for best arm identification \citep{mannor2004sample} or $\Omega\big(\frac{n}{\epsilon^2}\log\frac{k}{\delta}\big)$ for multiple arms identification \citep{kaufmann2013information, kalyanakrishnan2012pac, chen2014combinatorial} applies also for combinatorial bandits. The answer is not immediate. Intuitively, sampling $k$ arms together might provide more information, so that hypothetically less samples can be used to find the best subset. For example, if the goal is to detect an unknown number of counterfeit coins out of $n$ coins, and the agent is allowed to weigh any number of coins, then there exists an algorithm that identifies the counterfeit coins using only $\Theta\big(\frac{n}{\log n}\big)$ weighings, with or without the presence of noise \citep{erdos1963two,soderberg1963combinatory,bshouty2012coin}.


Despite the discussion above, the following theorem proves a lower bound of $\Omega\big(\frac{n}{\epsilon^2}\big)$ samples for combinatorial bandits, similar to the bounds for best- and multiple-arms identification tasks. 

\begin{theorem}\label{thm:lowerBound}
For any $n$ and $k\leq\frac{n}{2}$, and for any $0<\epsilon, \delta<\frac{1}{2}$, there exists a distribution over the assignment of rewards such that the sample complexity of any $(\epsilon,\delta)$-PAC algorithm is at least
\[M=\Omega\Big(\frac{n}{\epsilon^2}\Big)\]
\end{theorem}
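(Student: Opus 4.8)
The plan is to use the standard information-theoretic (change-of-measure) lower bound technique, adapted to full-bandit feedback. The conceptual crux, and the reason the bound is not immediate (cf.\ the counterfeit-coins discussion), is that the aggregated observation $r_S$ has variance $k$, and I must show this exactly cancels the apparent advantage of querying $k$ arms per round.

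\textbf{Construction.} I would fix $k-1$ \emph{anchor} arms $\{1,\dots,k-1\}$ with an enormous mean $H$, so that every $\epsilon$-optimal output must contain all of them, leaving a single contested slot to be filled from the set $C=\{k,\dots,n\}$ of size $n-k+1\ge n/2$. All rewards are Gaussian $\mathcal N(\theta_i,1)$ (which is $1$-subgaussian, as required). Let the base instance $\nu_0$ assign mean $0$ to every contested arm, and for each $j\in C$ let $\nu_j$ be identical to $\nu_0$ except that arm $j$ has mean $2\epsilon$. Under $\nu_j$ the unique $\epsilon$-optimal subset is $G_j:=\{1,\dots,k-1\}\cup\{j\}$ (dropping an anchor costs $\approx H$, and omitting $j$ costs $2\epsilon>\epsilon$), so any $(\epsilon,\delta)$-PAC algorithm outputs exactly $G_j$ with probability $\ge 1-\delta$. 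The hard instance will be $\nu_0$ itself (a legitimate instance under the $\epsilon$-tolerant PAC criterion, despite its ties).

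\textbf{Divergence decomposition.} Letting $N_j$ be the number of rounds in which arm $j$ is sampled and $M$ the total number of rounds, the standard change-of-measure lemma applied with the event $\mathcal E=\{\hat S=G_j\}$ gives
\[
\tfrac{2\epsilon^2}{k}\,\mathbb E_{\nu_0}[N_j]
=\sum_{S\ni j}\mathbb E_{\nu_0}[N_S]\,\mathrm{KL}\!\big(P^{\nu_0}_S,P^{\nu_j}_S\big)
\ge \mathrm{kl}\!\big(\mathbb P_{\nu_0}(\hat S=G_j),\,\mathbb P_{\nu_j}(\hat S=G_j)\big),
\]
where I use that for a subset $S\ni j$ the observation $r_S$ is Gaussian with variance $k$ and the two means differ by $2\epsilon$, so $\mathrm{KL}=\tfrac{(2\epsilon)^2}{2k}=\tfrac{2\epsilon^2}{k}$ (and $0$ when $j\notin S$), together with $\sum_{S\ni j}\mathbb E_{\nu_0}[N_S]=\mathbb E_{\nu_0}[N_j]$. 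The events $\{\hat S=G_j\}_{j\in C}$ are pairwise disjoint, so $\sum_{j\in C}\mathbb P_{\nu_0}(\hat S=G_j)\le 1$ and at most one $j$ has $\mathbb P_{\nu_0}(\hat S=G_j)>\tfrac12$. For each of the remaining $\ge n-k\ge n/2$ arms, monotonicity of $\mathrm{kl}(\cdot,1-\delta)$ gives $\mathrm{kl}(p,1-\delta)\ge\mathrm{kl}(\tfrac12,1-\delta)=:c(\delta)>0$ (positive since $\delta<\tfrac12$), hence $\mathbb E_{\nu_0}[N_j]\ge \tfrac{c(\delta)\,k}{2\epsilon^2}$.

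\textbf{Summation.} Summing over these $\ge n/2$ arms and invoking the counting identity $\sum_i N_i=kM$ (each of the $M$ rounds touches exactly $k$ arms) yields $k\,\mathbb E_{\nu_0}[M]\ge\sum_j \mathbb E_{\nu_0}[N_j]\ge \tfrac n2\cdot\tfrac{c(\delta)k}{2\epsilon^2}$, so $\mathbb E_{\nu_0}[M]=\Omega(n/\epsilon^2)$. The main obstacle is precisely the worry that aggregating $k$ arms could leak extra information: this is resolved in the KL computation, where the variance-$k$ scaling makes the per-round information about any one contested arm only $\tfrac{2\epsilon^2}{k}$ — a factor $k$ smaller than isolating the arm would give — and this loss exactly offsets the $\sum_i N_i=kM$ gain, so no net advantage survives. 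Getting this cancellation's bookkeeping right, and arranging the disjoint events $G_j$ so the argument binds $\Omega(n)$ arms simultaneously rather than a single two-point pair, is the technical heart of the proof.
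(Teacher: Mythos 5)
Your proof is correct and reaches the stated bound, but by a genuinely different route from the paper. The paper splits the argument into two regimes: for $k\leq\frac{n}{24}$ it perturbs a single arm per alternative instance, uses Markov's inequality and a pigeonhole step to exhibit $\Omega(n)$ arms that are both rarely sampled and rarely output under the base instance, and controls the per-round information leakage via a lemma of Audibert et al.\ bounding the KL divergence between sums of $k$ Bernoullis by $O(\epsilon^2/k)$; for $\frac{n}{24}<k\leq\frac{n}{2}$ it falls back on a two-instance argument (with a simulation lemma reducing the action set to two fixed subsets) that only gives $\Omega(k/\epsilon^2)$, which happens to equal $\Omega(n/\epsilon^2)$ in that regime. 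You instead (i) use Gaussian rewards so that the aggregate observation is exactly $\mathcal{N}(\mu_S,k)$ and the per-round KL is computed in closed form as $2\epsilon^2/k$, with no need for the Bernoulli-sum lemma; (ii) pin down $k-1$ slots with high-mean anchor arms so that each alternative instance has a unique $\epsilon$-optimal answer, replacing the paper's Markov/pigeonhole bookkeeping with the simpler observation that the $n-k\geq\frac{n}{2}$ disjoint events $\{\hat S=G_j\}$ cannot all have $\nu_0$-probability above $\frac12$; and (iii) close with the counting identity $\sum_i N_i=kM$. This yields a single uniform argument for all $k\leq\frac{n}{2}$, and even carries a $\log\frac{1}{\delta}$ factor through $c(\delta)=\mathrm{kl}\big(\frac12,1-\delta\big)$, which the theorem does not claim. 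The only points to make explicit in a full write-up are the divergence-decomposition lemma for adaptively chosen actions (including its extension to a possibly random number of rounds $M$) and the monotonicity of $p\mapsto\mathrm{kl}(p,1-\delta)$ on $[0,1-\delta]$ used to lower-bound the binary KL; both are routine.
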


The proof is based on \citet[chap.~2]{slivkins2019introduction}, but generalized for the combinatorial setting.  It defines two problem instances with small KL-divergence between them and shows that any algorithm that uses less samples than required is wrong with high probability.

\section{Experiments}

We compared our algorithm to other methods also experimentally, on simulated data. We conducted two experiments, one for the sample complexity and one for the regret. We describe here the experiments briefly, for more details see Appendix \ref{sec:experiments}.

For the sample complexity, we evaluate the accuracy of different sampling methods in comparison to EST1.
Figure \ref{fig:exp}(a) shows the mean square error of EST1 with Hadamard matrices along with two other sampling methods. It can be seen that Hadamard significantly outperforms the others. For the regret, we compared CSAR with the Sort \& Merge algorithm in \cite{agarwal2018regret}. Figure \ref{fig:exp}(b) shows the cumulative regret as a function of time for both algorithms. It can be seen that CSAR achieves significantly lower regret than Sort \& Merge.

\begin{figure}
    \centering
    {\includegraphics[width=6.5cm]{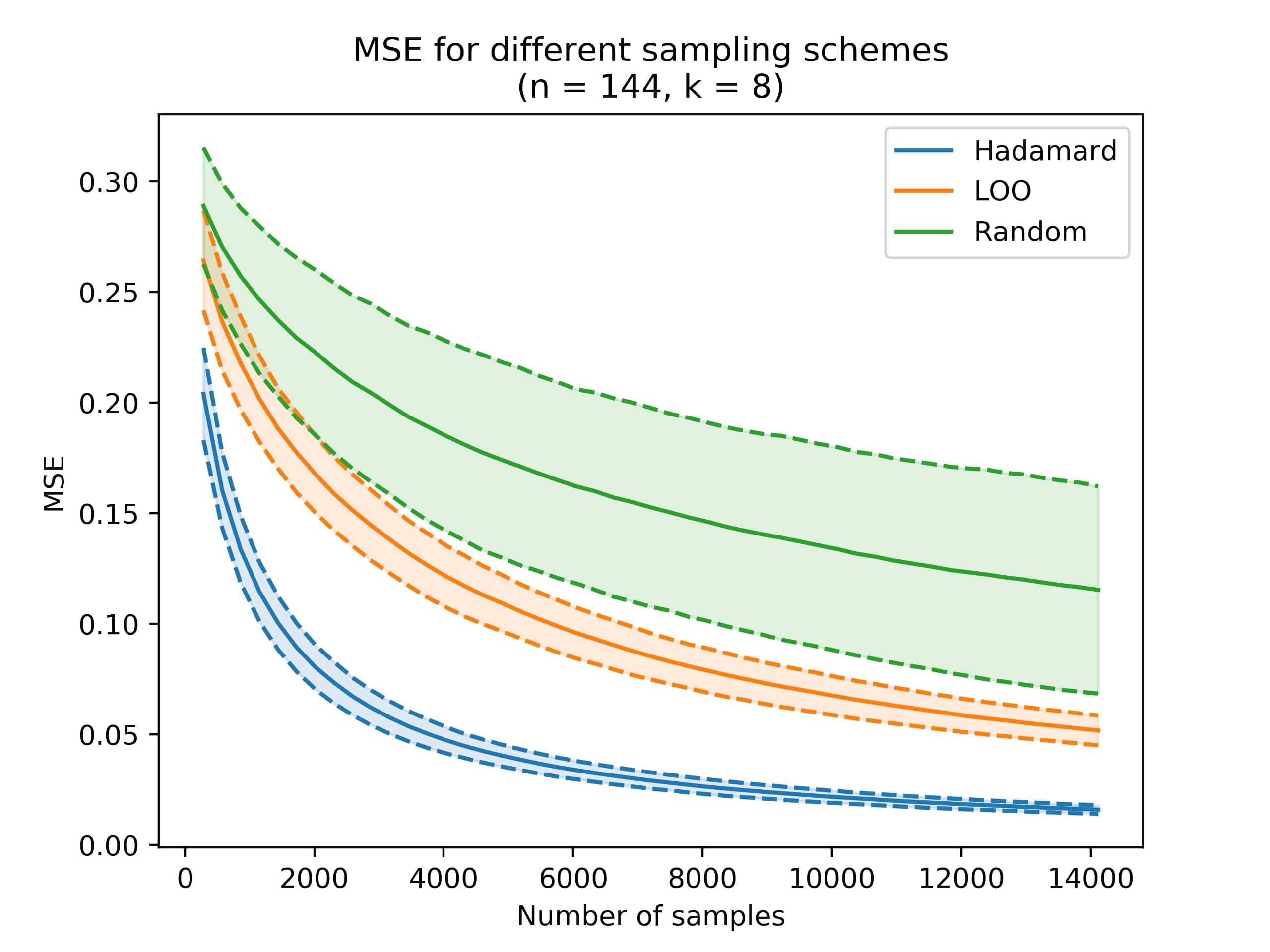} }%
    \qquad
    {\includegraphics[width=6.5cm]{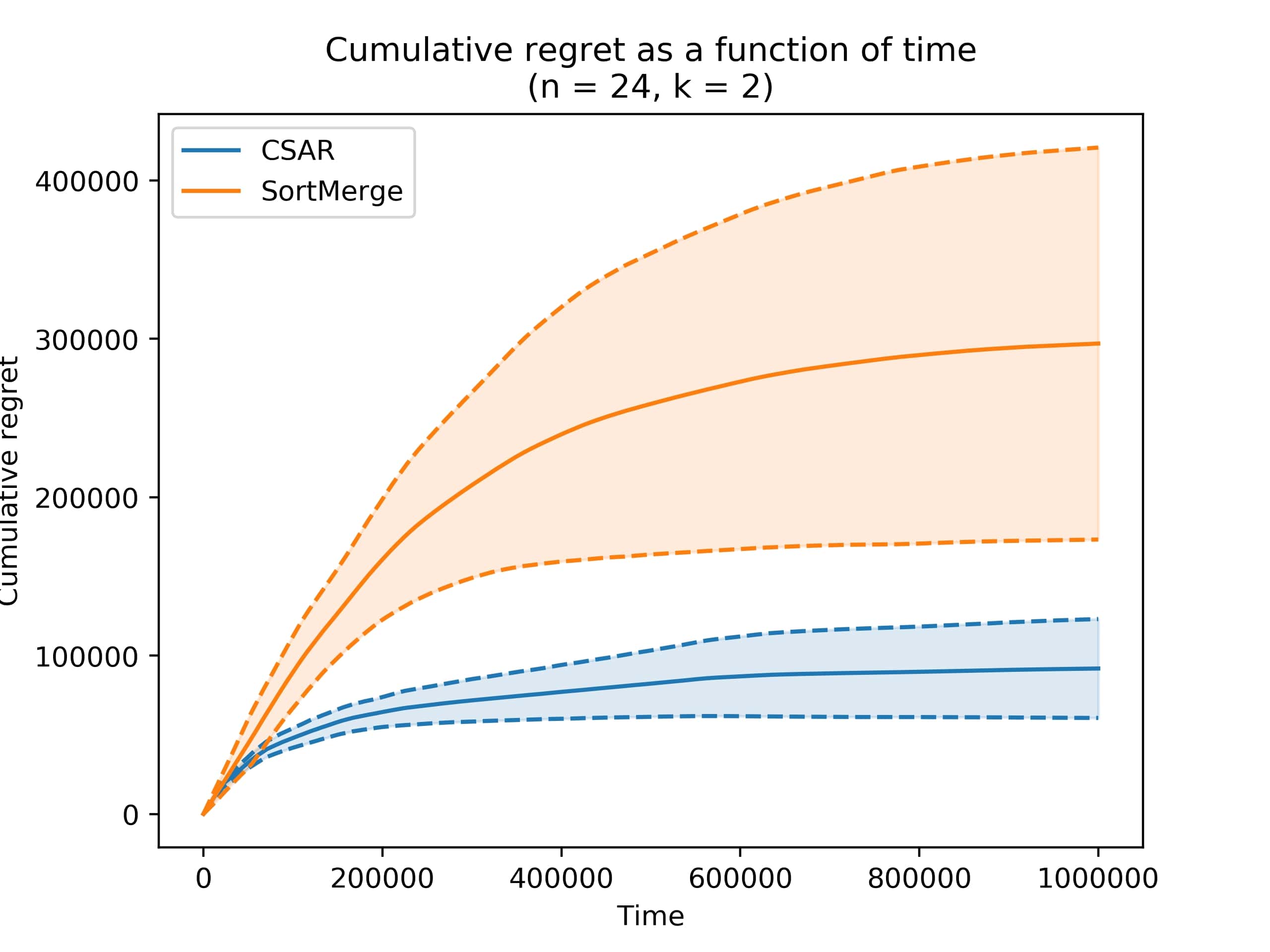} }%
    \caption{(a) Comparing sampling schemes (Hadamard, LOO and random).
    (b) Comparing regret as a function of time horizon (CSAR vs Sort \& Merge).}%
    \label{fig:exp}%
\end{figure}

\section{Discussion and Conclusions}
In this work we proposed a novel algorithm for top-k combinatorial bandits with full-bandit feedback. We presented the Combinatorial Successive Accepts and Rejects (CSAR) algorithm, and showed that it is $(0,\delta)$-PAC with sample complexity $O\big( \frac{n}{\Delta^2} \log\frac{n}{\delta}\big)$ and regret $O\big(\frac{nk}{\Delta} \log T\big)$ for time horizon $T$. For the sample complexity, we also proved a lower bound of $\Omega\big(\frac{n}{\epsilon^2}\big)$. To the best of our knowledge, this is the first lower bound for sample complexity of combinatorial bandits with full-bandit feedback. 

In addition, we tested our results empirically. First, we tried three sampling methods and showed that our novel method using Hadamard matrices achieves bigger accuracy within less samples, comparing to the baselines. Second, we compared the cumulative regret to \cite{agarwal2018regret}, and illustrated that CSAR outperforms the latter.

\acks{This work was supported part by the Yandex Initiative in Machine Learning.}


\newpage
\vskip 0.2in
\bibliography{references}

\appendix

\section{Proofs}
\subsection{Proof Lemma \ref{thm:pac} (PAC)}\label{proof:CsarPac}
\begin{proof}
For each phase $t$, define the event $E_t$ that at least one arm is estimated poorly, i.e.
$E_{t}=\{\exists i\,|\hat\theta_{t}^i-\theta_{t}^i|>\epsilon_t\}$,
and let $E=\bigcup_t E_t$. Note that CSAR is wrong only if at some phase it rejects an optimal arm or accepts a sub-optimal arm. This might happen only under the event $E$. Hence, the probability that CSAR is wrong is bounded by the probability of $E$.
By Lemma 1 for any $t$, $Pr[E_t]\leq\delta_t$ and thus by the union bound
\[Pr[E]\leq\sum_t Pr[E_t]\leq \sum_t \delta_t= \sum_{t=1}^{T}\frac{\delta_1}{t^2}\leq \delta_1 \sum_{t=1}^{\infty} \frac{1}{t^2}=\frac{6}{\pi^2}\delta\cdot\frac{\pi^2}{6}=\delta\]
Accordingly, the algorithm returns the optimal subset with probability at least $1-\delta$.
\end{proof}

\subsection{Proof EST2 Correctness}
\begin{lemma}
For any $\epsilon,\delta>0$ and any set of $n$ arms $\mathcal{N}$ and set of accepted arms $\mathcal{A}$, EST2 returns an estimated reward vector $\hat\theta$ such that
\[Pr\Big[\forall i,\,|\hat\theta_i-\theta_i|\leq \epsilon\Big]\geq 1-\delta\]
\end{lemma}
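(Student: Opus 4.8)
The plan is to follow the proof of Lemma~\ref{lemma:PACest1} for EST1, adapting it to the two new features of EST2: the fixed arms $\mathcal{A}$ that appear in every sampled subset, and the use of the separate inner EST1 estimates $\hat\theta_a$ to cancel their contribution. As before, I would first establish that $\hat\theta$ is unbiased on the free arms, then bound the concentration of each coordinate, and finish with a union bound over the $n$ arms.

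For unbiasedness, the key observation is that each accepted arm $a\in\mathcal{A}$ is placed into \emph{both} $S_{i,+1}$ and $S_{i,-1}$, so for every row $i\geq 2$ its contribution cancels in the difference $\hat{Z}_i=\hat\mu_{i,+1}-\hat\mu_{i,-1}$, leaving $\mathbb{E}[\hat Z_i]=\sum_{j=1}^{2k'}H_{ij}\theta_j=H_i^\intercal\theta$ exactly as in EST1. For the all-ones first row the two subsets are summed rather than differenced, so the fixed arms contribute $2\sum_{a\in\mathcal{A}}\theta_a$, which is precisely what the explicit correction $-2\sum_{a\in\mathcal{A}}\hat\theta_a$ removes; since EST1 is unbiased (Lemma~\ref{lemma:PACest1}), this gives $\mathbb{E}[\hat Z_1]=\sum_{j=1}^{2k'}\theta_j=H_1^\intercal\theta$. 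Hence $\mathbb{E}[\hat Z]=H\theta$ and $\mathbb{E}[\hat\theta_{\mathcal{N}_l}]=\tfrac{1}{2k'}H^\intercal H\theta=\theta$ on the free arms.

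For concentration I would track how the noise inflates. Each sampled set $S_{i,b}=S'_{i,b}\cup\mathcal{A}$ has exactly $k'+|\mathcal{A}|=k$ arms, so each $\hat\mu_{i,b}$ is $\tfrac{k}{m}$-subgaussian, each $\hat Z_i$ is $\tfrac{2k}{m}$-subgaussian, and propagating through $\tfrac{1}{2k'}H^\intercal$ makes the sampling part of $\hat\theta_i-\theta_i$ subgaussian with parameter $\tfrac{1}{(2k')^2}\cdot 2k'\cdot\tfrac{2k}{m}=\tfrac{k}{k'm}$. This is a factor $\tfrac{k}{k'}$ worse than in EST1, and the inflated budget $m(\epsilon,\delta,k')=\tfrac{2k}{k'}\cdot\tfrac{2}{\epsilon^2}\log\tfrac{2n}{\delta}$ is chosen precisely to cancel it, so that the Hoeffding bound for subgaussians again yields per-arm error probability at most $\tfrac{\delta}{n}$; a union bound over the $n$ coordinates then gives the claim.

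The main obstacle is the extra error injected by the imperfect estimates $\hat\theta_a$ of the fixed arms. Because the first row of $H$ is all ones, the term $-\tfrac{1}{2k'}\cdot 2\sum_{a}(\hat\theta_a-\theta_a)=-\tfrac{1}{k'}\sum_a(\hat\theta_a-\theta_a)$ enters \emph{every} free-arm estimate as a common additive shift. I would control it by conditioning on the good event of the inner EST1 call (Lemma~\ref{lemma:PACest1}), on which each $|\hat\theta_a-\theta_a|\leq\epsilon$, and folding its failure probability into the overall $\delta$ via the union bound. The delicate point is that in the worst case this shift is of size $\tfrac{|\mathcal{A}|}{k'}\epsilon$, so to keep the total deviation within $\epsilon$ one must either estimate the fixed arms to the sharper accuracy $\Theta(\tfrac{k'}{|\mathcal{A}|}\epsilon)$ in the inner call, or exploit that the shift is common to all coordinates and thus cancels in the arm-to-arm comparisons that CSAR actually performs; I expect reconciling this absolute-accuracy statement with the shared-shift structure to be the crux of the argument.
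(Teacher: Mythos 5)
Your unbiasedness argument and your accounting of the pure sampling noise (the fixed arms cancel in the differences for rows $i\ge 2$, the explicit correction handles row $1$, each $\hat\mu_{i,b}$ is $\tfrac{k}{m}$-subgaussian because the sampled sets have $k$ arms, and the inflated budget $m(\epsilon,\delta,k')$ absorbs the $\tfrac{k}{k'}$ factor) match the paper's proof. But the step you yourself flag as the crux is left open, and neither of the two fixes you sketch yields the stated lemma. Conditioning on the good event $|\hat\theta_a-\theta_a|\le\epsilon$ for the inner EST1 call and then carrying the worst-case shift $\tfrac{|\mathcal{A}|}{k'}\epsilon$ does not bound the total deviation by $\epsilon$; demanding sharper inner accuracy $\Theta(\tfrac{k'}{|\mathcal{A}|}\epsilon)$ is not what EST2 does (the inner call is run with the same $\epsilon,\delta$); and arguing that the shift is common to all coordinates and cancels in comparisons proves a different, weaker relative statement than the absolute per-coordinate bound claimed.

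The paper closes this step by never passing through an absolute bound on $\hat\theta_a-\theta_a$ at all. From the proof of Lemma~\ref{lemma:PACest1}, each $\hat\theta_a-\theta_a$ is a \emph{zero-mean} $\tfrac{1}{m}$-subgaussian random variable, so the correction term $2\sum_{a\in\mathcal{A}}(\hat\theta_a-\theta_a)$ contributes an $O(\tfrac{k}{m})$ term to the variance proxy of $\eta_{\hat Z_1}$, making $\eta_{\hat Z_1}$ $\tfrac{4k}{m}$-subgaussian instead of $\tfrac{2k}{m}$; propagating through $\tfrac{1}{2k'}H^\intercal$ then gives each coordinate a variance proxy of $\tfrac{4k}{2k'm}$, and the choice $m(\epsilon,\delta,k')=\tfrac{2k}{k'}\cdot\tfrac{2}{\epsilon^2}\log\tfrac{2n}{\delta}$ still yields per-coordinate failure probability $\tfrac{\delta}{n}$, after which the union bound finishes as in EST1. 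The point you are missing is that variance proxies add across the (at most $k$) fixed arms and then shrink by the factor $\tfrac{1}{(2k')^2}$ when divided by $2k'$, so the inner-estimate noise is a lower-order contribution to the concentration bound rather than a deterministic bias of size $\epsilon$ per arm; worst-case absolute errors enjoy neither of these savings. With that substitution your argument goes through and coincides with the paper's.
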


\begin{proof}
The proof is similar to Lemma \ref{lemma:PACest1}, so we only stress the differences.
First, when proving that $\hat\theta$ is an unbiased estimator of $\theta$, we have $S=S'\cup\mathcal{A}$, hence for each $i\neq 1$,
\[\begin{split}
    \mathbb{E}[\hat{Z}_i]= &\mu_{i,+1} - \mu_{i,-1} = \sum_{j\in S_{i,+1}'} \theta_j + \sum_{j\in \mathcal{A}} \theta_j - \bigg(\sum_{j\in S_{i,-1}'} \theta_j + \sum_{j\in \mathcal{A}} \theta_j \bigg) \\ =&\sum_{j\in S_{i,+1}'} \theta_j - \sum_{j\in S_{i,-1}'} \theta_j=\sum_{j=1}^{2k'} H_{ij}\theta_j = H_i^\intercal\theta
\end{split} \]
and for $i=1$
\[\begin{split}
    \mathbb{E}[\hat{Z}_1]=& \mu_{1,-1} + \mu_{1,+1} - 2\sum_{j\in\mathcal{A}} \theta_j \\ =&\sum_{j\in S_{1,+1}'} \theta_j + \sum_{j\in \mathcal{A}} \theta_j + \sum_{j\in S_{1,-1}'} \theta_j + \sum_{j\in \mathcal{A}} \theta_j - 2\sum_{j\in\mathcal{A}} \theta_j \\ =&\sum_{j\in S_{1,+1}'} \theta_j + \sum_{j\in S_{1,-1}'} \theta_j=\sum_{j=1}^{2k'} H_{1j}\theta_j = H_1^\intercal\theta
\end{split} \]

Second, we prove each $\eta_{\hat{Z}_i}$ is subgaussian. For any $i\neq 1$ the proof remains the same. For $i=1$,
\begin{equation}\label{eqn:Z1}
    \hat{Z}_1= \hat\mu_{1,-1} + \hat\mu_{1,+1} - 2\sum_{j\in\mathcal{A}} \hat\theta_j
\end{equation}
Thus the noise consists of $\eta_{1,-1}, \,\eta_{1,+1}$ which are $\frac{k}{m}$-subgaussians, and the noise of each $\hat\theta_j$. We proved in Lemma \ref{lemma:PACest1} that the latter is $\frac{1}{m}$-subgaussian, and therefore when summing at most $k$ such terms and multiplying by $2$ we get that the last term in (\ref{eqn:Z1}) is $\frac{2k}{m}$-subgaussian. Summing all terms, we get that $\eta_{\hat{Z}_i}$ is $\frac{4k}{m}$-subgaussian.

Finally, note that the estimation noise of $\hat\theta$, given by
$\hat\theta_i-\theta_i=\frac{1}{2k'}\sum_{j=1}^{2k'} H_{ij}\eta_{Z_j} $,
is $\frac{4k}{2k'm}$-subgaussian. Thus for $m=\frac{2k}{k'}\frac{2}{\epsilon^2}\log\frac{2n}{\delta}$ it holds
$Pr\Big[|\hat\theta_i-\mathbb{E}[\hat\theta_i]|\geq \epsilon\Big]\leq\frac{\delta}{n}$, and by the union bound the toal probability of error is at most $\delta$.
\end{proof}

\subsection{Proof Theorem \ref{thm:weakRegret} (CSAR with EST1's Regret)}

\begin{proof}
Since only sub-optimal arms are responsible for regret, we consider only them. For any sub-optimal arm $i$, its maximal gap is $\Delta_{1:i}$. Hence, the total regret is given by
$R\leq\sum_{i=k+1}^n M_i\Delta_{1:i}$
where $M_i$ is the number of times arm $i$ is sampled.
In order to translate this bound to terms of the time horizon $T$, recall that if the algorithm goes wrong, it might suffer a regret of $kT$, and this happens with probability $\delta$. To avoid it, we take $\delta=\frac{1}{kT}$. Using Lemma \ref{lemma:armSampleComp} to bound $M_i$ we get the desired regret bound.
\end{proof}

\subsection{Proof Corollary \ref{cor:IndRegret} (Distribution-Independent Regret)}

\begin{proof}
Consider the $(\epsilon,\delta)$-PAC variant of CSAR that stops exploring when $\epsilon_t\leq\epsilon$ and then keep selecting the best $k$ estimated arms for the rest of the time horizon $T$ (see Remark 3). Note that when the exploration stops, the arms estimations are at most $\epsilon$ far from their real values, according to Lemma \ref{lemma:PACest1}. Hence, the gap between the optimal subset and any subset of surviving arms is at most $k\epsilon$, and thus their regret is at most $R^<\leq k\epsilon T$.
This should be added to the regret caused by the arms that were eliminated up to this stage. According to Lemma \ref{lemma:individualArmRegret}, the contribution of a sub-optimal arm $i$ to the regret is bounded by $\frac{k}{\Delta_i}\log \frac{n}{\delta}$. Since it was eliminated before phase $t$, it must hold that $\Delta_i>\epsilon$. The number of eliminated arms is clearly bounded by $n$, and thus their contribution to the regret is
\[R^>\leq C\sum_{i:\Delta_i>\epsilon}\frac{k}{\Delta_i}\log T\leq C\sum_{i:\Delta_i>\epsilon}\frac{k}{\epsilon}\log T \leq C\frac{nk}{\epsilon}\log T\]
for some constant $C$. Concluding both parts of regret, we get
$R\leq C\frac{nk}{\epsilon}\log T + \epsilon k T$.
This is true as long as the number of rounds is at least the number of samples done by CSAR up to phase $t$, that is $T\geq C'\frac{n}{\epsilon^2}\log \frac{n}{\delta}$ for some constant $C'$.
Thus for $\epsilon=\sqrt{\frac{C'n}{T}\log \frac{n}{\delta}}$ and $\delta=\frac{1}{kT}$ we get the desired regret bound.
\end{proof}
\subsection{Proof Theorem \ref{thm:lowerBound} (Lower Bound)}
We prove the lower bound in a few steps. We first prove a bound of $\Omega\big(\frac{k}{\epsilon^2}\big)$ for $k\leq\frac{n}{2}$, then prove a stronger bound of $\Omega\big(\frac{n}{\epsilon^2}\big)$ for $k\leq\frac{n}{24}$, and finally sum up both proofs to get the desired bound.
\subsubsection*{Lower bound for $k\leq\frac{n}{2}$}
To prove the lower bound for $k\leq\frac{n}{2}$, we first define the following profiles with $n=2k$ arms.
\begin{equation}\label{dfn:probLowerBoundk2}
\begin{split}
    \mathcal{I}_1=\{X_i\sim Ber(p_i^1)\}_{i=1}^{2k}\,\,\,where\,\,\,p_i^1=\begin{cases}
    \frac{1}{2}+\frac{\epsilon}{k} & i=1\dots k \\
    \frac{1}{2}-\frac{\epsilon}{k} & i=k+1\dots 2k
    \end{cases}\\
    \mathcal{I}_2=\{X_i\sim Ber(p_i^2)\}_{i=1}^{2k}\,\,\,where\,\,\,p_i^2=\begin{cases}
    \frac{1}{2}-\frac{\epsilon}{k} & i=1\dots k \\
    \frac{1}{2}+\frac{\epsilon}{k} & i=k+1\dots 2k
    \end{cases}
\end{split}
\end{equation}
In what follows we assume that $i\in\{1,2\}$ is selected randomly and the agent gets to play against profile $\mathcal{I}_i$ without knowing the value of $i$.

\begin{lemma_thm}\label{lemma:simuAlg}
Any algorithm A that runs on problem (\ref{dfn:probLowerBoundk2}) and selects any subset $S\subset[2k]$ of size $k$ can be simulated by an algorithm A' that selects only $\mathcal{K}_1=\{1\dots k\}$ and $\mathcal{K}_2=\{k+1\dots 2k\}$ with the same amount of samples.
\end{lemma_thm}
\begin{proof}
Fix an algorithm A and a subset $S$. Let $(S_1,S_2)$ be a partition of $S$, i.e., $S_1\cap S_2=\emptyset, S_1\cup S_2=S$, such that $S_1\subseteq \mathcal{K}_1$ and $S_2\subseteq \mathcal{K}_2$. Assume without loss of generality $|S_1|=s_1, |S_2|=s_2$ and $s_1\geq s_2$. Then, there are at least $s_2$ arms in $S$ with mean $\frac{1}{2}+\frac{\epsilon}{k}$ and $s_2$ arms with mean $\frac{1}{2}-\frac{\epsilon}{k}$. As we observe only the sum of the rewards, which is in this case $s_2\big(\frac{1}{2}+\frac{\epsilon}{k}\big)+s_2\big(\frac{1}{2}-\frac{\epsilon}{k}\big)=\frac{2s_2}{2}$, we may simulate these $2s_2$ arms with the same amount of fair coins with probability $\frac{1}{2}$.

We now show how to simulate the distribution of the rest $s=s_1-2s_2\leq k$ arms in $S_1\subseteq \mathcal{K}_1$ using one sample of $\mathcal{K}_1$. Sample $\mathcal{K}_1$ once and let $r$ be the outcome. Create a binary vector of size $k$ with $r$ $1$s. Since the arms in $\mathcal{K}_1$ are identical, this vector represents the outcome of any individual arm in the subset, up to some permutation between the arms. Then, select random $s$ entries from the vector and return their sum. This procedure simulates exactly the distribution of $s$ arms in $\mathcal{K}_1$ given that the sum of $\mathcal{K}_1$ is $r$.
\end{proof}

We now bound the sample complexity for $k\leq\frac{n}{2}$.

\begin{lemma_thm}\label{lemma:lowerBound2}
For any $n$ and $k\leq\frac{n}{2}$, and for any $\epsilon>0$, there exists a reward distribution such that the sample complexity of any $(\epsilon,\delta)$-PAC algorithm is at least
\[M=\Omega\Big(\frac{k}{\epsilon^2}\Big)\]
\end{lemma_thm}
\begin{proof}
We begin with $k=\frac{n}{2}$. Consider problem (\ref{dfn:probLowerBoundk2}), and we show that any algorithm have to use at least $T\geq\frac{ck}{\epsilon^2}$ samples (for some constant $c>0$ to be set later) in order to identify the correct subset with high probability. Assume by contradiction that there is an algorithm that uses $T\leq\frac{ck}{\epsilon^2}$ samples and returns a subset $S_T$ such that
\begin{equation}\label{eqn:contradiction}
    \forall i=1,2.\,P_i[S_T=\mathcal{K}_i]=Pr[S_T=\mathcal{K}_i|\mathcal{I}_i]\geq \frac{3}{4}
\end{equation}
By Lemma \ref{lemma:simuAlg}, it is enough to consider only algorithms that sample only $\mathcal{K}_1$ and $\mathcal{K}_2$. Let $\Omega=\{0,1\}^{n\times T}$ be the sample space of possible rewards of the arms and let $A=\{\omega\subseteq\Omega\,|\,S_T=\mathcal{K}_1\}$ be the event that the algorithm outputs $\mathcal{K}_1$. According to Pinsker's inequality,
\[2(P_1[A]-P_2[A])^2\leq KL(P_1,P_2)=\sum_{t=1}^T\sum_{i=1}^2 KL(P_1^{i,t},P_2^{i,t})\]
where $KL$ is the Kullback-Leibler divergence between two distributions, and $P_j^{i,t}$ denotes the distribution of rewards at time $t$ given that subset $\mathcal{K}_i$ was selected and the profile is $\mathcal{I}_j$. Note that $P_j^{i,t}$ is a binomial distribution with $k$ samples and probability $\frac{1}{2}\pm\frac{\epsilon}{k}$ and thus the $KL$ divergence satisfies
\[KL(P_1^{i,t},P_2^{i,t})=k\cdot KL(\frac{1}{2}+\frac{\epsilon}{k},\frac{1}{2}-\frac{\epsilon}{k})\leq 8k\big(\frac{\epsilon}{k}\big)^2\] 
Therefore we have
\[2(P_1[A]-P_2[A])^2\leq \sum_{t=1}^T\sum_{i=1}^2 KL(P_1^{i,t},P_2^{i,t})\leq T\cdot 8k \frac{\epsilon^2}{k^2}\leq 8c\]
where we used the assumption $T\leq\frac{ck}{\epsilon^2}$. Thus for $c\leq\frac{1}{16}$ we have that $|P_1[A]-P_2[A]|\leq\frac{1}{2}$. Due to assumption (\ref{eqn:contradiction}) we have that $P_2[A]=Pr[S_T=\mathcal{K}_1]|\mathcal{I}_2]\leq\frac{1}{4}$ and therefore
\[P_1[A]\leq P_2[A]+\frac{1}{2}\leq\frac{3}{4}\]
in contradiction to (\ref{eqn:contradiction}). Finally, for $k<\frac{n}{2}$ we may add to the profiles $\mathcal{I}_1,\mathcal{I}_2$ arms with mean $0$ that may only increase the number of samples.
\end{proof}

\subsubsection*{Lower bound for $k\leq\frac{n}{24}$}
To prove the lower bound for $k\leq\frac{n}{24}$, we use Lemma 4 from \cite{audibert2013regret}. For convenience, we cite the lemma.
\begin{lemma_thm}\label{lemma:kl}
Let $l$ and $k$ be integers with $\frac{1}{2}\leq\frac{k}{2} \leq l \leq k$. Let $p,p',q,p_1,\dots,p_{k-1}\in(0, 1)$ with $q \in \{p, p'\}$, $p_1 = \dots = p_l = q$ and $p_{l+1} =\dots = p_{k-1}$. Let $\mathcal{B}$ (resp. $\mathcal{B'}$) be the sum of $k$ independent Bernoulli distributions with parameters $p, p_1, \dots, p_{k-1}$ (resp. $p', p_1, \dots , p_{k-1}$). We have
\[KL(\mathcal{B},\mathcal{B}')\leq\frac{2(p'-p)^2}{(1-p')(k+1)q}\]
\end{lemma_thm}

We now prove the lower bound for $k\leq\frac{n}{24}$.
\begin{lemma_thm}\label{lemma:lowerBound24}
For any $n$ and $k\leq\frac{n}{24}$, and for any $\epsilon>0$, there exists a reward distribution such that the sample complexity of any $(\epsilon,\delta)$-PAC algorithm is at least
\[M=\Omega\Big(\frac{n}{\epsilon^2}\Big)\]
\end{lemma_thm}
\begin{proof}
For any $j\in[n]$ define the following profile
\[\mathcal{I}_j=
\begin{cases}
X_i\sim Ber\Big(\frac{1}{2}\Big) & i\neq j\\
X_i\sim Ber\Big(\frac{1}{2}+\epsilon\Big) & i= j
\end{cases}\]
and also define $\mathcal{I}_0=\{X_i\sim Ber\Big(\frac{1}{2}\Big)\,|\,i=1\dots n\}$. We use the abbreviation $P_j[\cdot]$ ($\mathbb{E}_j[\cdot]$) to denote the probability (expectation) when the arms are distributed according to $\mathcal{I}_j$. Suppose that there exists an algorithm that runs for $T\leq \frac{cn}{\epsilon^2}$ steps for some $c>0$ under profile $\mathcal{I}_0$ and returns a subset $S_T$. We first show that there are many arms that are sampled only a few times and are not part of $S_T$ with high probability.\\
For any $j\in[n]$ let $T_j$ denote the number of times $j$ is sampled. Then,
\[\sum_{j=1}^n\mathbb{E}_0[T_j]=kT\leq\frac{cnk}{\epsilon^2}\]
Then for at least $\frac{2}{3}$ of the arms it holds $\mathbb{E}_0[T_j]\leq\frac{3ck}{\epsilon^2}$ (otherwise the sum over all arms is bigger then $kT$). Accordingly, by Markov inequality for each of these arms
$P_0[T_j\geq T^*]\leq\frac{1}{8}$ where $T^*=\frac{24ck}{\epsilon^2}$.
For similar considerations, for at least $\frac{2}{3}$ of the arms it holds that $P_0[j\in S_T]\leq\frac{3k}{n}\leq\frac{1}{8}$ (where we assumed $k\leq \frac{n}{24}$). Thus, by pigeon hole there exists a subset of arms $B\subset [n]$ such that $|B|\geq\frac{1}{3}n$ and for all $j\in B$ the following holds
\begin{equation}\label{eqn:BsetProbs}
    P_0[T_j>T^*]\leq\frac{1}{8}\; and\;P_0[j\in S_T]\leq\frac{1}{8}
\end{equation}

Fix an arm $j\in B$ and we prove $P_j[j\in S_T]\leq\frac{1}{2}$. Let $\Omega^*$ denote the sample set of possible arms rewards under the restriction that $j$ was sampled at most $T^*$ times, and let $P^*$ denote the corresponding distribution. By Pinsker's inequality, for any event $A\subset\Omega^*$ the distance between two probability distributions satisfy
\begin{equation}\label{pinsker}
    2(P^*_0[A]-P^*_j[A])^2\leq KL(P^*_0,P^*_j)=\sum_{t=1}^T KL(P^{S_t}_0,P^{S_t}_j)
\end{equation}
where $P^{S_t}_j$ denotes the reward distribution of the subset $S_t$ under profile $\mathcal{I}_j$. Note that all arms except $j$ are identically distributed under $\mathcal{I}_0$ and $\mathcal{I}_j$, and therefore for any $S_t$ that does not include $j$ the $KL$ divergence is zero. Hence, we only need to consider rounds $t\in[T]$ when $j$ was sampled as part of $S_t$. By Lemma \ref{lemma:kl} with $p=\frac{1}{2}+\epsilon$ and $p'=q=p_1=\dots=p_{k-1}=\frac{1}{2}$ we have
\[KL(P^{S_t}_0,P^{S_t}_j)\leq\frac{2\epsilon^2}{\frac{1}{2}\cdot\frac{1}{2}(k+1)}\leq\frac{8\epsilon^2}{k}\]

Substituting in (\ref{pinsker}) gives
\[2(P^*_j[A]-P^*_0[A])^2\leq \sum_{t:\,j\in S_t} KL(P^{S_t}_j,P^{S_t}_0)=\sum_{t:\,j\in S_t} \frac{8\epsilon^2}{2k} \leq \frac{24ck}{\epsilon^2} \frac{8\epsilon^2}{2k} =96c\leq\frac{1}{32}\]
where we assumed $c<\frac{1}{3072}$. We conclude that for any event $A\subset\Omega^*$, $P^*_j[A]\leq P^*_0[A]+\frac{1}{8}$. 

Define the following events
\[A=\{j\in S_T \wedge T_j\leq T^*\}\; and\; A'=\{T_j> T^*\}\]
Note that both $A,A'\subset\Omega^*$ since whether $j$ is sampled more than $T^*$ times is completely determined by the first $T^*$ samples. Thus,
\begin{equation*}
    \begin{split}
    P^*_j[A]\leq P^*_0[A]+\frac{1}{8}\leq \frac{1}{8}+\frac{1}{8}=\frac{1}{4}\\
    P^*_j[A']\leq P^*_0[A']+\frac{1}{8}\leq \frac{1}{8}+\frac{1}{8}=\frac{1}{4}
\end{split}
\end{equation*}
where the probabilities are bounded due to (\ref{eqn:BsetProbs}). Finally we have
\[P_j[j\in S_T]\leq P_j[j \in S_T \wedge T\leq T^*]+P_j[T>T^*]\leq\frac{1}{4}+\frac{1}{4}=\frac{1}{2}\]
Namely, every algorithm that runs less then $\frac{cn}{\epsilon^2}$ rounds will err on more than $\frac{1}{3}$ of the instances and return an $\epsilon$-far set with probability at least $\frac{1}{2}$.
\end{proof}

\subsubsection*{Sum up}
In Lemma \ref{lemma:lowerBound24} we showed that for $k\leq\frac{n}{24}$ the sample complexity is at least $\Omega\big(\frac{n}{\epsilon^2}\big)$, and in Lemma \ref{lemma:lowerBound2} we showed that for $k\leq\frac{n}{2}$ it is at least $\Omega\big(\frac{k}{\epsilon^2}\big)$. Note that for $\frac{n}{24}\leq k\leq\frac{n}{2}$ we can write $k=O(n)$ and thus we can sum up both cases to deduce Theorem \ref{thm:lowerBound}.

\section{Sample Complexity in \cite{kuroki2019polynomial}}\label{appendix:kuroki}
We refer to the sample complexity upper bound of $O\big(\frac{\rho(p)}{\Delta^2}k^5n^\frac{1}{4}\log\frac{n}{\delta}\big)$ in \cite{kuroki2019polynomial}, where $\rho(p)$ depends on the distribution of arms' selection $p$. We show that for any $p$, $\rho(p)\geq\frac{n}{k}$, and thus the sample complexity is given by $O\big(\frac{n^\frac{5}{4}k^4}{\Delta^2}\log\frac{n}{\delta}\big)$. 

We start by citing some definitions. For any set $S$ of $k$ arms let $\chi_S\in\{0,1\}^n$ denote its indicator vector. Fix an algorithm for finding the best subset, and let $p(S)$ be the probability that the algorithm selects $S$. Define $\Lambda_p=\sum_{S\subset[n]}p(S)\chi_S\chi_S^\intercal$ and $\rho(p)=\max_S \chi_S^\intercal\Lambda_p^{-1}\chi_S$.
We want to bound $\rho(p)$. For that, we first prove the following claim which will be useful for bounding $\rho(p)$.


\begin{claim}\label{claim:bilinear}
For any vector $x\in\mathbb{R}^n$ and any invertible and symmetric matrix $A$ of size $n$,
\[(x^\intercal Ax)(x^\intercal A^{-1}x)\geq \norm{x}_2^4\]
\end{claim}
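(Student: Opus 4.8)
The plan is to reduce the inequality to the Cauchy--Schwarz inequality by working in the eigenbasis of $A$. Since $A$ is symmetric, the spectral theorem supplies an orthonormal basis $v_1,\dots,v_n$ of eigenvectors with real eigenvalues $\lambda_1,\dots,\lambda_n$, all nonzero because $A$ is invertible; in the application $A=\Lambda_p$ is positive semidefinite and invertible, hence positive definite, so we may take $\lambda_i>0$ and form the square roots $\sqrt{\lambda_i}$. Writing $x=\sum_i c_i v_i$ with $c_i=\langle x,v_i\rangle$, the three quantities in the claim become diagonal sums: $x^\intercal A x=\sum_i \lambda_i c_i^2$, $\;x^\intercal A^{-1}x=\sum_i \lambda_i^{-1} c_i^2$, and $\norm{x}_2^2=\sum_i c_i^2$.

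With these expressions in hand the statement is a one-line application of Cauchy--Schwarz. First I would split each coordinate as $c_i^2=(\sqrt{\lambda_i}\,c_i)(c_i/\sqrt{\lambda_i})$, so that
\[\norm{x}_2^2=\sum_{i=1}^n (\sqrt{\lambda_i}\,c_i)\,(c_i/\sqrt{\lambda_i}) \leq \Big(\sum_{i=1}^n \lambda_i c_i^2\Big)^{1/2}\Big(\sum_{i=1}^n \lambda_i^{-1} c_i^2\Big)^{1/2} = \big((x^\intercal A x)(x^\intercal A^{-1}x)\big)^{1/2}.\]
Squaring both sides yields $\norm{x}_2^4\leq (x^\intercal A x)(x^\intercal A^{-1}x)$, which is exactly the claim.

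The only real subtlety is the need to extract the square roots $\sqrt{\lambda_i}$, which requires the eigenvalues to be positive; for a general symmetric invertible matrix with mixed-sign eigenvalues the inequality can fail (for instance $A=\mathrm{diag}(1,-1)$ with $x=(1,1)$ gives a zero left-hand side but $\norm{x}_2^4=4$), so the argument genuinely relies on $A$ being positive definite, as holds for $\Lambda_p$. Once positivity is secured there is no further obstacle: diagonalization turns the two quadratic forms and the squared norm into weighted $\ell_2$ sums with reciprocal weights, and Cauchy--Schwarz closes the argument immediately.
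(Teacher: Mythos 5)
Your proof is correct and is essentially the same as the paper's: both diagonalize in the orthonormal eigenbasis of $A$ and then apply a scalar inequality to the weights $c_i^2$ and eigenvalues $\lambda_i$ --- the paper invokes the weighted AM--HM inequality $\bigl(\sum_i c_i^2\lambda_i\bigr)\bigl(\sum_i c_i^2\lambda_i^{-1}\bigr)\geq\bigl(\sum_i c_i^2\bigr)^2$, which is exactly your Cauchy--Schwarz step applied to $a_i=\sqrt{\lambda_i}\,c_i$ and $b_i=c_i/\sqrt{\lambda_i}$. Your closing remark is a genuine and worthwhile addition: the claim as stated assumes only that $A$ is symmetric and invertible, which is insufficient (your counterexample $A=\mathrm{diag}(1,-1)$, $x=(1,1)$ is valid), and the paper's own AM--HM step also silently requires $\lambda_i>0$; the hypothesis should read ``positive definite,'' which indeed holds for $\Lambda_p$ in the application.
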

\begin{proof}
Let $v_1\dots v_n$ be $A$'s eigenvectors corresponding to the eigenvalues $\lambda_1\dots\lambda_n$. We write $x=\sum_{i=1}^n \alpha_i v_i$, then 
$x^\intercal Ax=\sum_{i=1}^n \alpha_i^2 \lambda_i$ and $x^\intercal A^{-1}x=\sum_{i=1}^n \alpha_i^2 \lambda_i^{-1}$
since $A$ is symmetric and therefore its eigenvectors are orthonormal. According to the weighted version of the inequality of arithmetic and harmonic means, we have
\[\frac{x^\intercal Ax}{\norm{x}_2^2}= \frac{\sum_{i=1}^n \alpha_i^2 \lambda_i}{\sum_{i=1}^n \alpha_i^2}\geq\frac{\sum_{i=1}^n \alpha_i^2}{\sum_{i=1}^n \alpha_i^2 \lambda_i^{-1}}=\frac{\norm{x}_2^2}{x^\intercal A^{-1}x}\]
\end{proof}

\begin{claim}\label{claim:kuroki}
For any distribution $p$, $\rho(p)\geq\frac{n}{k}$
\end{claim}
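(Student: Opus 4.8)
The plan is to apply the bilinear inequality of Claim \ref{claim:bilinear} with $x=\chi_S$ and $A=\Lambda_p$. Since $\Lambda_p$ is symmetric and (for $\rho$ to be well defined) invertible, the claim gives, for every $k$-subset $S$,
\[
\chi_S^\intercal\Lambda_p^{-1}\chi_S\;\ge\;\frac{\norm{\chi_S}_2^4}{\chi_S^\intercal\Lambda_p\chi_S}\;=\;\frac{k^2}{\chi_S^\intercal\Lambda_p\chi_S},
\]
where I used $\norm{\chi_S}_2^2=k$. Since $\rho(p)=\max_S\chi_S^\intercal\Lambda_p^{-1}\chi_S$, taking the maximum over $S$ gives $\rho(p)\ge k^2/\min_S\chi_S^\intercal\Lambda_p\chi_S$. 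Hence the problem reduces to exhibiting a single subset $S$ with $\chi_S^\intercal\Lambda_p\chi_S\le k^3/n$, which then yields $\rho(p)\ge k^2/(k^3/n)=n/k$.

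To produce such a subset I would use an averaging argument: the minimum of $\chi_S^\intercal\Lambda_p\chi_S$ over all $k$-subsets is at most its average when $S$ is drawn uniformly from $\binom{[n]}{k}$. Expanding $\chi_S^\intercal\Lambda_p\chi_S=\sum_{i,j}(\Lambda_p)_{ij}(\chi_S)_i(\chi_S)_j$ and taking expectations, the only inputs needed are the first two inclusion moments of a uniform $k$-subset, namely $\Pr[i\in S]=k/n$ and $\Pr[i,j\in S]=\frac{k(k-1)}{n(n-1)}$ for $i\ne j$. This yields
\[
\mathbb{E}_S\big[\chi_S^\intercal\Lambda_p\chi_S\big]=\frac{k}{n}\operatorname{tr}(\Lambda_p)+\frac{k(k-1)}{n(n-1)}\big(\mathbf{1}^\intercal\Lambda_p\mathbf{1}-\operatorname{tr}(\Lambda_p)\big).
\]

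The two scalars appearing here are immediate from the definition of $\Lambda_p$: since every $\chi_S$ has exactly $k$ ones, $\operatorname{tr}(\Lambda_p)=\sum_S p(S)\norm{\chi_S}_2^2=k$ and $\mathbf{1}^\intercal\Lambda_p\mathbf{1}=\sum_S p(S)(\mathbf{1}^\intercal\chi_S)^2=k^2$. Substituting gives $\mathbb{E}_S[\chi_S^\intercal\Lambda_p\chi_S]=\frac{k^2}{n}\big(1+\frac{(k-1)^2}{n-1}\big)$, and since $k\le n$ the bracket is at most $1+(k-1)=k$, so the average is at most $k^3/n$. Combining with the first step completes the proof.

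The computation is routine once set up; the one place to be careful is the combined bound $1+\frac{(k-1)^2}{n-1}\le k$, since bounding the two terms of the average separately is lossy (it costs a factor of $2$), whereas keeping them together and using $\frac{k-1}{n-1}\le 1$ gives the sharp constant. I note in passing that the identity $\sum_S p(S)\chi_S^\intercal\Lambda_p^{-1}\chi_S=\operatorname{tr}(\Lambda_p^{-1}\Lambda_p)=n$ would in fact give the stronger $\rho(p)\ge n$ whenever $\Lambda_p$ is full rank, but the route through Claim \ref{claim:bilinear} already suffices for the stated bound.
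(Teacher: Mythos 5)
Your proof is correct. Both arguments share the same skeleton: apply Claim \ref{claim:bilinear} with $x=\chi_S$ to reduce the problem to exhibiting one $k$-subset $S$ with $\chi_S^\intercal\Lambda_p\chi_S\le k^3/n$. Where you diverge is in how that subset is produced. The paper constructs it explicitly: it interprets the entries of $\Lambda_p$ as marginal inclusion probabilities $p_{ij}$, takes $S$ to be the $k$ arms with the smallest diagonal marginals (whose average is at most $k/n$ since $\operatorname{tr}(\Lambda_p)=k$), and bounds the off-diagonal entries by $p_{ij}\le p_{ii}$. You instead use the probabilistic method, averaging $\chi_S^\intercal\Lambda_p\chi_S$ over a uniformly random $k$-subset; this needs only the two scalars $\operatorname{tr}(\Lambda_p)=k$ and $\mathbf{1}^\intercal\Lambda_p\mathbf{1}=k^2$ and never touches the probabilistic interpretation of the individual entries. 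Your computation checks out, including the non-lossy bound $1+\frac{(k-1)^2}{n-1}\le k$, and the averaging route is arguably cleaner since it avoids the paper's somewhat crude step $\sum_{i,j\le k}p_{ij}\le k\sum_{i\le k}p_{ii}$ (which contains a typo in the paper as written). Your closing remark is also correct and worth emphasizing: since $\sum_S p(S)\,\chi_S^\intercal\Lambda_p^{-1}\chi_S=\operatorname{tr}(\Lambda_p^{-1}\Lambda_p)=n$ whenever $\Lambda_p$ is invertible (which the definition of $\rho(p)$ already presupposes), the maximum over the support of $p$ is at least $n$, so in fact $\rho(p)\ge n$ --- a strictly stronger conclusion than the claim, obtained with less work.
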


\begin{proof}
First consider $\Lambda_p$'s trace.
\[tr(\Lambda_p)=\sum_{S\subset[n]}p(S)tr(\chi_S\chi_S^\intercal)=\sum_{S\subset[n]}p(S)tr(\chi_S^\intercal\chi_S)=\sum_{S\subset[n]}p(S)k=k\]
where we used the fact that $\chi_S$ contains exactly $k$ ones and that $\sum_{S\subset[n]}p(S)=1$. 

Next, note that entry $i,j$ in $\Lambda_p$ is the marginal probability $p_{ij}$ of arms $i,j$ being selected together according to $p$. Accordingly, the entries on the diagonal $p_{ii}$ are the marginal probabilities of single arms. We saw that $tr(\Lambda_p)=\sum_{i=1}^n p_{ii}=k$, namely the average $\frac{1}{n}\sum_{i=1}^n p_{ii}=\frac{k}{n}$. Assume that the arms are ordered such that $p_{11}\leq\dots\leq p_{nn}$. Then the average of the minimal $k$ arms satisfies
    $\frac{1}{k}\sum_{i=1}^k p_{ii}\leq\frac{k}{n}$.
Thus, for the set $S=\{1\dots k\}$, \[\chi_S^\intercal\Lambda_p\chi_S=\sum_{i=1}^k\sum_{j=1}^k p_{ij}\leq k\sum_{i=1}^k p_{ij} \leq k\frac{k^2}{n} \]
where the first inequality is because $\forall i,j, p_{ii}\geq p_{ij}$.\\
Finally, note that $\Lambda_p$ is symmetrical and thus by Claim \ref{claim:bilinear} we have
\[\chi_S^\intercal \Lambda_p^{-1} \chi_S
\geq \frac{\norm{\chi_S}_2^4}{\chi_S^\intercal \Lambda_p \chi_S}\geq\frac{k^2}{\frac{k^3}{n}}\]
which shows that $\rho(p)=\max_{S'}\chi_{S'}^\intercal \Lambda_p^{-1} \chi_{S'}\geq \chi_{S}^\intercal \Lambda_p^{-1} \chi_{S}\geq\frac{n}{k}$.
\end{proof}

\section{Experiments}\label{sec:experiments}

We compared our algorithm to other methods also experimentally, on simulated data. We conducted two experiments, one for the sample complexity and one for the regret.

\subsection{Sample Complexity}
For the sample complexity, we evaluate the accuracy of different sampling methods in comparison to EST1.
Figure \ref{fig:sample}(a) shows the mean square error of EST1, which uses Hadamard matrix, along with two other sampling methods. The first is Leave One Out (LOO), that partitions the arms into sets of size $k+1$ and in each one samples all the $k+1$ subsets of size $k$. The second method samples a random $2k\times2k$ matrix, such that in each row $k$ entries are $+1$ and $k$ are $-1$. In this experiment, each arm is a normal random variable with random mean in $[0,1]$ and $\sigma^2=1$, and we set $n=144$ and $k=8$. The plot shows the average and standard deviation of $1000$ runs. It can be seen that Hadamard significantly outperforms the other two methods.


The high variance in the random matrices' MSE stems from the variance involved in the choice of the matrices. To see that, we tested the relation between the MSE and the condition number of different matrices. In linear regression, the condition number is defined by the ratio between the biggest and the smallest singular values $\frac{\sigma_{max}}{\sigma_{min}}$ and it measures the affect of deviations in the response variable on the estimation error to the. Figure \ref{fig:sample}(b) shows the MSE of $1000$ random matrices as a function of their condition number, where each point is the average of $100$ independent experiments. It can be seen that the MSE is indeed monotone with the condition number, with Spearman correlation of 96\%. However, we note that while the relation between the two is expected to be linear according to theory, the relation observed in the experiments is not linear. 


\begin{figure}[h]
    \centering
    {{\includegraphics[width=6.5cm]{images/mse_plot.jpg} }}%
    \qquad
    {{\includegraphics[width=6.5cm]{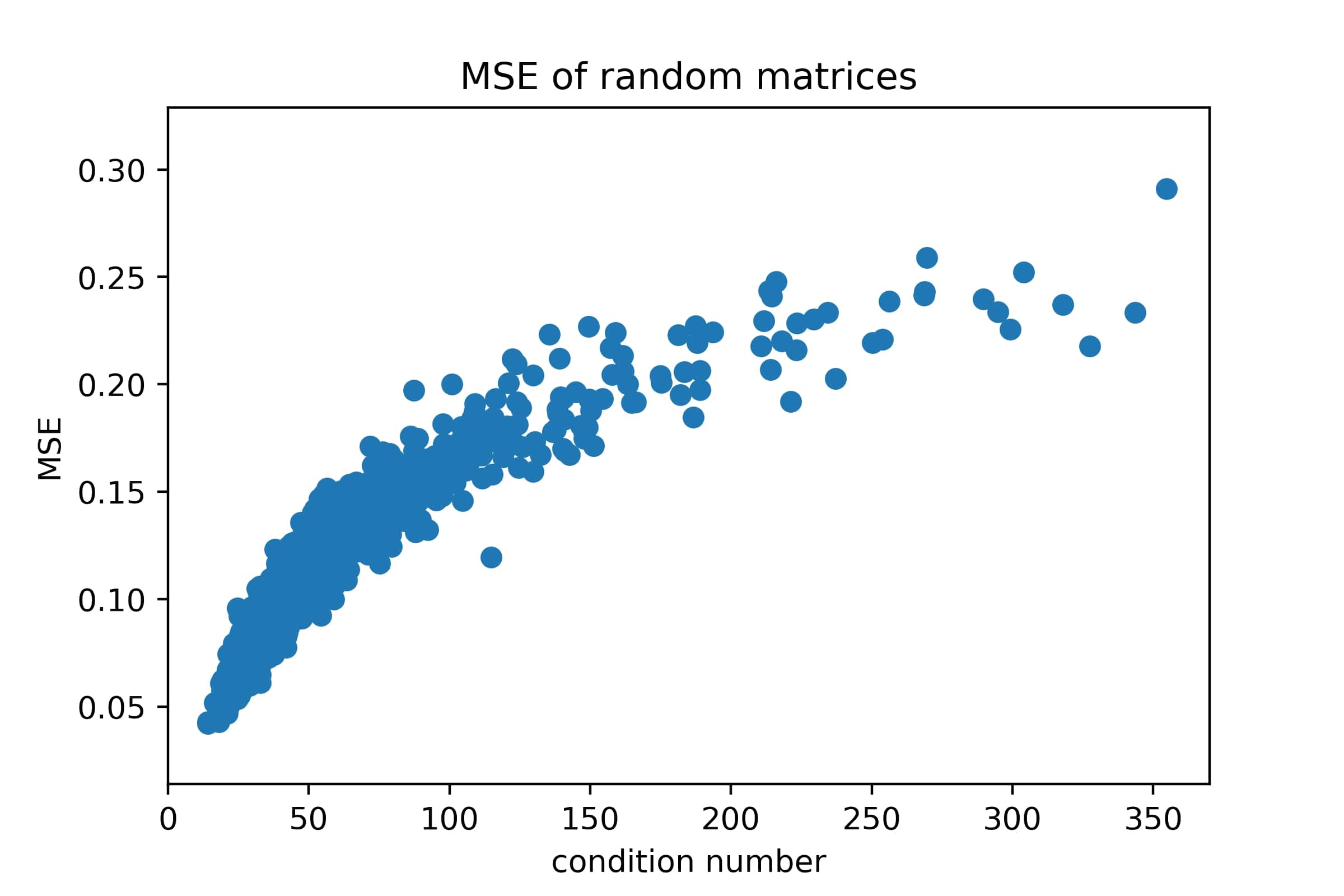} }}%
    \caption{(a) Comparing sampling schemes (Hadamard, LOO and random).
    (b) MSE of random matrices as a function of their condition number.}%
    \label{fig:sample}%
\end{figure}

\newpage
\subsection{Regret}
For the regret, we compared CSAR's performance to the Sort \& Merge algorithm in \cite{agarwal2018regret}. Figure \ref{fig:regret}(a) shows the cumulative regret as a function of time for both algorithms. In this experiment, we initialized the arms to be Bernoulli random variables with random mean in $[0,1]$, and we set $n=24$ and $k=2$. The plot shows the average and standard deviation of $100$ runs. It can be seen that CSAR achieves significantly lower regret than Sort \& Merge. 
In addition, we test the consistency of this gap for different $k$s. Figure \ref{fig:regret}(b) shows the cumulative regret after 5 millions steps for differnt $k$ values. The plot shows the average and standard deviation of $35$ runs.
Note that the large deviations in Sort \& Merge's regret in both plots result from the random initialization of the arms that might effect the exploration's duration dramatically.

\begin{figure}[h]
    \centering
    {{\includegraphics[width=6.5cm]{images/regret_plot.jpg} }}%
    \qquad
    {{\includegraphics[width=6.5cm]{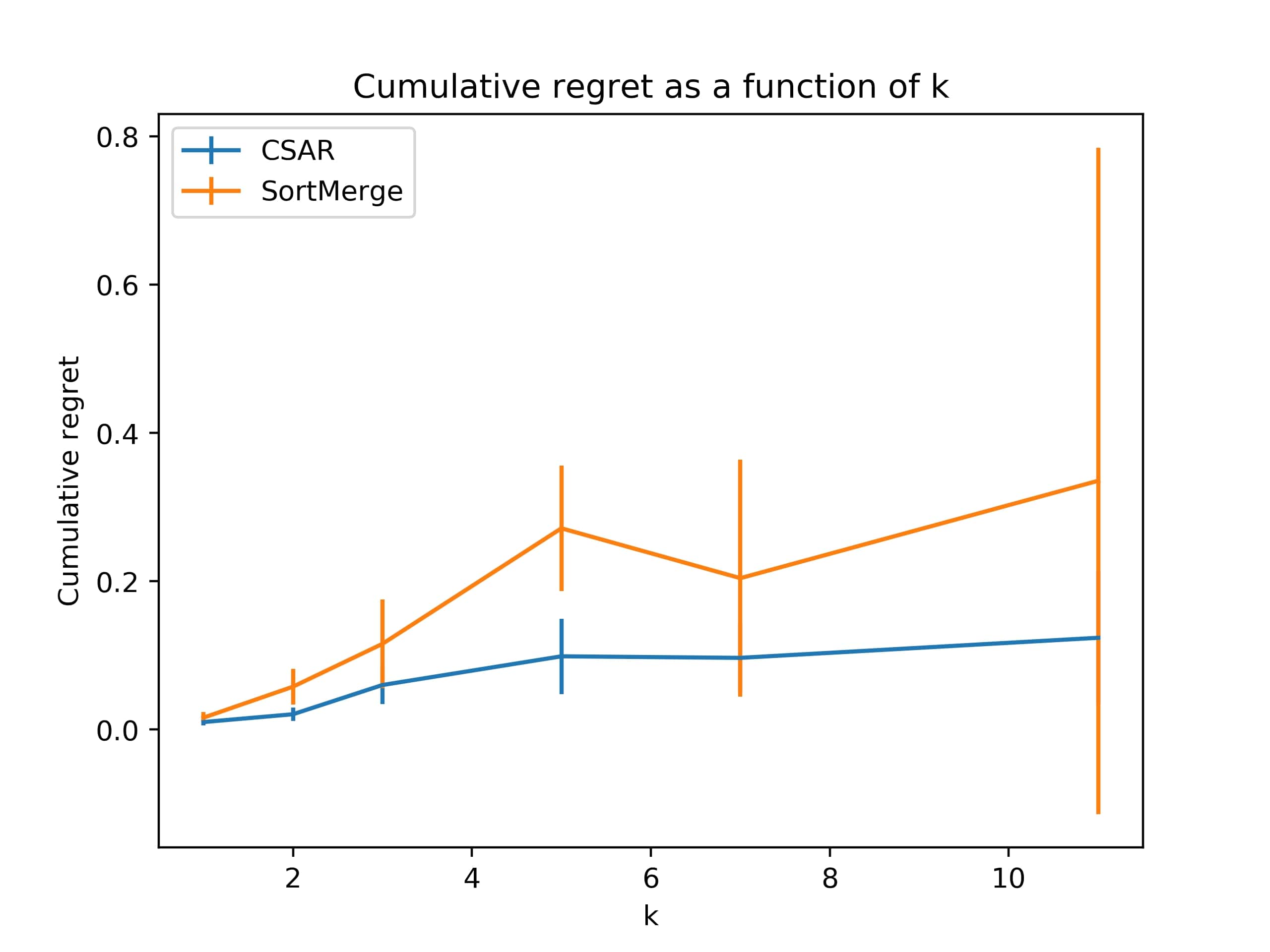} }}%
    \caption{Comparing regret (CSAR vs Sort \& Merge) as a function of: (a) time horizon $T$ (b) subset size $k$}%
    \label{fig:regret}%
\end{figure}

\end{document}